\documentclass[preprint,authoryear,10pt,a4paper]{elsarticle}
\usepackage[top=0.75in, bottom=0.75in, left=0.75in, right=0.75in]{geometry}
\usepackage{palatino}
\usepackage{amsmath,amsthm}
\usepackage{amssymb}
\usepackage{bbm}
\usepackage{amsfonts}
\usepackage{graphicx}
\usepackage{subfigure}
\usepackage{grffile}
\graphicspath{{graphics/}}
\usepackage[ruled,vlined,linesnumbered]{algorithm2e}
\usepackage{tabularx,booktabs}
\usepackage{inputenc} 
\usepackage[T1]{fontenc}   
\usepackage{url}            
\usepackage{booktabs}       %
\usepackage{amsmath,amsthm}
\usepackage{amssymb}
\usepackage{bbm}
\usepackage{amsfonts}       
\usepackage{nicefrac}       
\usepackage{microtype}      
\usepackage{tikz}
\usepackage{rotfloat}
\usepackage{subfigure}
\usepackage{graphicx}
\usepackage{caption}
\usepackage{appendix}
\usepackage{tablefootnote}
\usepackage{bm}
\usepackage{mathrsfs}
\usepackage{xcolor}
\usepackage{colortbl}

\usepackage{verbatim}
\usepackage{xcolor}
\definecolor{darkblue}{rgb}{0.0,0.5,0.5}
\definecolor{blue}{rgb}{0.0,0.59,0.84}
\definecolor{myblue}{RGB}{0,0,255}

\usepackage[colorlinks]{hyperref}
\hypersetup{colorlinks,breaklinks,linkcolor=blue,urlcolor=blue,anchorcolor=blue,citecolor=blue}
\usepackage{booktabs,caption}
\usepackage{multirow,bigdelim}
\usepackage{lscape}
\usepackage{lineno}
\hyphenpenalty=2000
\tolerance=500
\usepackage{microtype}
\urlstyle{same}

\newtheorem{lemma}{Lemma}

\usepackage{soul}
\usepackage{lineno}
\usepackage{fancyhdr}
\newcommand{\minew}[1]{{\color{myblue}{#1}}}

\journal{Transportation Research Part C: Emerging Technologies}

\pagestyle{fancy}
\lhead{{\fontfamily{lmss}\selectfont T. Nie et al.}}
\rhead{{\fontfamily{lmss}\selectfont Spatiotemporal Implicit Neural Representations for Traffic Data Learning}}

\begin{document}

\begin{frontmatter}





\title{{\fontfamily{lmss}\selectfont \textbf{Spatiotemporal Implicit Neural Representation as a Generalized Traffic Data Learner}}}


\author[label,label1]{Tong Nie}
\author[label]{Guoyang Qin}
\author[label1]{Wei Ma\corref{cor1}}
\ead{wei.w.ma@polyu.edu.hk}
\author[label]{Jian Sun\corref{cor1}}
\ead{sunjian@tongji.edu.cn}

\address[label]{Department of Traffic Engineering, Tongji University, Shanghai, 201804, China}
\address[label1]{Department of Civil and Environmental Engineering, The Hong Kong Polytechnic University, Hong Kong SAR, China}

\cortext[cor1]{Corresponding authors.}

\begin{abstract}
Spatiotemporal Traffic Data (STTD) measures the complex dynamical behaviors of the multiscale transportation system. 
Existing methods aim to reconstruct STTD using low-dimensional models. 
However, they are limited to data-specific dimensions or source-dependent patterns, restricting them from unifying representations. 
Here, we present a novel paradigm to address the STTD learning problem by parameterizing STTD as an implicit neural representation. 
To discern the underlying dynamics in low-dimensional regimes, coordinate-based neural networks that can encode high-frequency structures are employed to directly map coordinates to traffic variables.
To unravel the entangled spatial-temporal interactions, the variability is decomposed into separate processes.
We further enable modeling in irregular spaces such as sensor graphs using spectral embedding. 
Through continuous representations, our approach enables the modeling of a variety of STTD with a unified input, thereby serving as a generalized learner of the underlying traffic dynamics. 
It is also shown that it can learn implicit low-rank priors and smoothness regularization from the data, making it versatile for learning different dominating data patterns.
We validate its effectiveness through extensive experiments in real-world scenarios, showcasing applications from corridor to network scales. 
Empirical results not only indicate that our model has significant superiority over conventional low-rank models, but also highlight that the versatility of the approach extends to different data domains, output resolutions, and network topologies. 
Comprehensive model analyses provide further insight into the inductive bias of STTD.
We anticipate that this pioneering modeling perspective could lay the foundation for universal representation of STTD in various real-world tasks. 

\end{abstract}

\begin{keyword}
Implicit neural representations, Traffic data learning, Spatiotemporal traffic data, Traffic dynamics, Multilayer perceptron
\end{keyword}

\end{frontmatter}

\begin{figure}[!htb]
\centering
\includegraphics[width=0.99\textwidth]{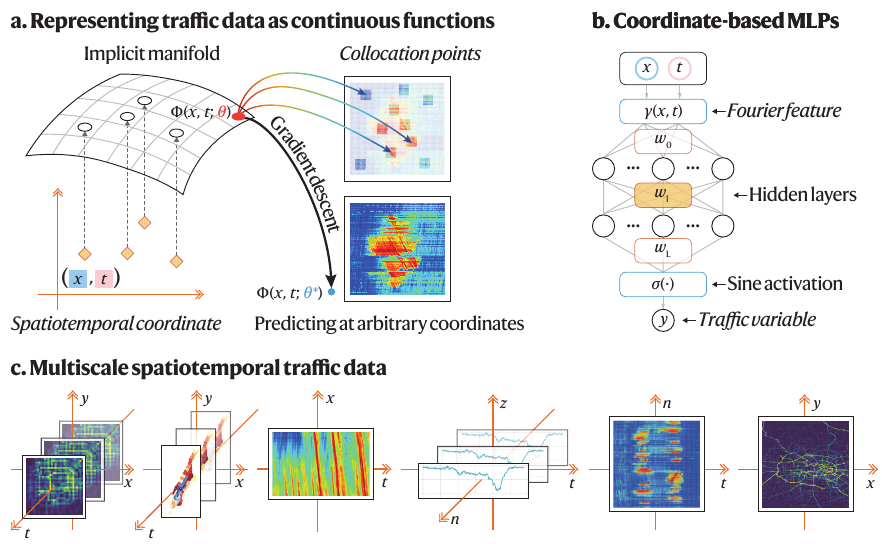}
\caption{\textbf{Representing spatiotemporal traffic data as an implicit neural function.} \textbf{(a)} Traffic data at arbitrary spatial-temporal 
 coordinates can be represented as a continuous function in an implicit space. \textbf{(b)} Coordinate-based MLPs map input coordinates to traffic state of interest. \textbf{(c)} With the resolution-independent property, our model can learn a variety of spatiotemporal traffic data from different sources.}
\label{fig:intro}
\end{figure}

\section{Introduction}\label{Introduction}
The heterogeneous elements involved in a vehicular traffic system, such as travel demand, human behaviors, infrastructure supply, weather conditions, and social economics, lead to a complex, high-dimensional, and large-scale dynamical system \citep{avila2020data}. 
To better understand this system, spatiotemporal traffic data (STTD) is a quantity that is often collected to describe its evolution in space and time \citep{yuan2021survey}.
This data includes various sources such as vehicle trajectories, sensor-based time series, and dynamic mobility flow, which is measured either by Eulerian measurements at fixed locations or Lagrangian measurements from mobile sensors. 
Traffic participants are constantly immersed in STTD, perceiving it through the daily travels of moving agents, recording it through different types of traffic detectors, and finally analyzing and utilizing it.

In practical traffic engineering, STTD is one of the main ingredients required to provide vital information for the timely and effective deployment of large-scale traffic control strategies \citep{tsitsokas2023two,hu2024demonstration}, data-driven traffic demand management \citep{nallaperuma2019online}, and elaborate traffic optimization routines \citep{wang2024traffic} in modern intelligent transportation systems \citep{zhang2011data}. 
Meanwhile, the ever-increasing amount of STTD in transportation systems has left traffic agencies in need of a generalized method to analyze the continuously collected data in various types.
However, one can face the challenge of fully understanding the mystery of STTD, which can be caused by multiple interwoven factors. 
For example, traffic congestion dynamics involves entangled relationships between spatial and temporal domains \citep{bellocchi2020unraveling,saberi2020simple,duan2023spatiotemporal}, and the high dimensionality of STTD further intensifies the challenges in data processing \citep{asif2013spatiotemporal}.
These difficulties undermine its direct utility in reality.

STTD modeling has gained greater attention in the field of traffic data science and engineering in recent years.
To dissect the spatiotemporal complexity, 
the primary aim of STTD learning is to develop data-centric \textit{learners} that can accurately \textit{learn predictive functions from observations and predict complex dynamics of STTD}.
In this sense, the system that generates STTD can be characterized as a nonlinear dynamical system that exhibits abundant multiscale and interactive phenomena. Despite its complexity, 
recent advances in STTD have found that the dynamics of the system evolve with some dominating patterns and can be captured by some low-dimensional structures \citep{thibeault2024low,wu2024predicting}.
Notably, low-rankness is a widely studied structure. The low-rank learner uses the algebraic structures of STTD and organizes it into a matrix or a tensor format \citep{asif2016matrix}. Models based on it assist in reconstructing sparse data \citep{tan2013tensor,wang2018traffic,chen2019missing,nie2022truncated,lyu2024tucker}, detecting anomalies \citep{wang2021hankel,sofuoglu2022gloss}, revealing patterns \citep{avila2020data,wang2023anti}, forecasting future variables \citep{yang2021real}, and predicting unknown states \citep{zhang2020network,nie2023correlating,wang2023low,xing2023customized}. 

While great progress has been made by using low-rank models as learners of STTD, these methods have focused either on patterns that are applicable to specific data structures and dimensions or have demonstrated state-of-the-art results with source-dependent priors. This limits the potential for a unified representation and emphasizes the need for a generally applicable method to link various types of STTD learning.
To address these limitations, we first carefully survey existing low-rank models and have identified the primary shortcomings that hinder their versatility.
(1) \textit{Absence of high-frequency components}: The rationale for low-rank models is to completely remove high-frequency components and treat them as noise. However, high-frequency components can provide informative details for the accurate reconstruction of spatial-temporal fields \citep{luo2024continuous}. Omitting this section of the signals will lead to a loss of information such as nonrecurrent events and phase transitions. 
(2) \textit{Resolution dependency}: Due to the matrix or tensor organization, these models are restricted to regular sampling intervals or discrete mesh grids. 
They often require a grid-based input with fixed spatiotemporal dimensions, restricting them from accommodating varying spatial resolutions or temporal lengths. 
Such a restriction makes them infeasible to work beyond the current discretization, which violates the continuous nature of STTD. More importantly, STTD is sometimes sensitive to the choice of discretization. For example, the definition of a space-time cell has an obvious impact on the reproduction of congestion wave \citep{he2017constructing}.
In addition, they are also memory intensive because the discrete representation requires the entire input of numerous grids.
(3) \textit{Explicit regularization}: To ensure favorable performance, some structural priors are usually explicitly imposed on standard low-rank models. For instance, a predefined small rank or a surrogate norm function is necessary to facilitate the optimization process \citep{goulart2017traffic,chen2020nonconvex}. 
Such a rank or norm selection procedure can be tricky,
and the low-rank pattern modeling, fixed on one data source, may not generalize to different data sources. For instance, low-rankness identified in one data type, such as vehicle trajectories, may not be applicable to differently structured data, such as OD demand.
Moreover, some prior regularization terms are also needed to regulate the local consistency of STTD \citep{yu2016temporal,chen2022laplacian,nie2023correlating}.  
These hand-crafted a priori are eventually processed into heuristic penalty functions and a choice of several hyperparameters, which are scenario-based and less adaptable to different problems.


To unravel these difficulties, we attempt to learn the data-generating dynamics of STTD directly. Fortunately, deep learning approaches exist for a promising facet of this problem.
Recent studies have empirically demonstrated the surprising ability of deep neural networks to learn predictive representations of dynamical systems, such as physics-informed neural networks for traffic speed and density field estimation \citep{huang2020physics,shi2021physics,zhang2024physics,zheng2024recovering}. 
However, it seems impossible to derive authentic governing equations or explicit regularities of these complex traffic dynamics in all scenarios due to unpredictable factors in real-world conditions. This prompts us to develop data-driven \textit{implicit} techniques.
Recently, there has been a rise in the prominence of implicitly defined, continuous, and expressive data representation models parameterized by deep neural networks. This novel paradigm uses neural networks to discern patterns from continuous input.
It has proven to be powerful in representing images, videos, and point cloud data, offering numerous advantages over traditional representations \citep{SIREN,FourierFeature,chen2021learning}. 
The implicit neural {representations} (INRs) function in a continuous function space and take domain coordinates as input, predicting the corresponding quantity at the queried coordinates. 
INRs directly model the mapping from low-dimensional regimes, such as coordinates and derivatives, to high-frequency structures, which is necessary to represent processes defined implicitly by some governing dynamics. 
This differentiates them from low-rank models that depend on explicit dominating patterns, enhancing their expressivity in learning complex data details, and enabling them to fit processes that generate target data with functional representation. 
Consequently, the continuous representations eliminate the need for fixed data dimensions and can adjust to STTD of any scale or resolution, allowing us to model various STTD with a unified input.
In this work, we exploit the advances of INRs and tailor them to incorporate the characteristics of STTD, resulting in a novel method that serves as a general traffic data learner (see Fig. \ref{fig:intro}a).


Specifically, we parameterize the implicit mapping from input domain to traffic states using coordinate-based multilayer perceptrons (MLPs) (see Fig. \ref{fig:intro}b). To learn complex details within the definition domain, we encode high-frequency components into the input of MLPs. To unravel the entangled relationships between spatial and temporal factors, we decompose the variability into separate processes through coordinate disentanglement. To model STTD exists in irregular space such as a sensor graph, we formulate a spectral embedding technique to learn non-Euclidean mappings.
In addition to the versatility of the representation, we also theoretically show that it possesses several salient features that are crucial for STTD in practical applications. They include: (i) implicit low-rank regularization derived from the gradient descent over deep matrix factorization; (ii) inherent smoothness from the continuity of MLPs; and (iii) reduced computational complexity. 
As a result, the proposed method explicitly encodes high-frequency structures to learn complex details of STTD while at the same time implicitly learning low-rank and smooth priors from data to reconstruct the dominating modes.
Through the lens of generalized representations, 
we can explore the possibility of developing a task-agnostic base learner that can predict the collective behaviors of STTD and address a wide variety of learning problems in reality (see Fig. \ref{fig:intro}c).

Our proof-of-concept has shown promising results through extensive evaluations using real-world datasets. The proposed learner is versatile, working across different scales - from corridor-level to network-level applications. It can also be generalized to various input dimensions, data domains, output resolutions, and network topologies. Comprehensive model analyses are also provided to fully examine its working mechanism.
This study offers novel perspectives on STTD modeling and provides an extensive analysis of practical applications, contributing to state-of-the-art STTD learning methods.
To our knowledge, this is the first time that INRs have been effectively applied to STTD learning and have demonstrated promising results in a variety of real-world tasks. 
We anticipate this could form the basis for developing foundational models for STTD.
Our main contributions are summarized as follows:
\begin{enumerate}
    \item A new paradigm for generalized STTD learning is presented that parameterizes traffic variables as an implicit mapping of domain coordinates;
    \item A coordinate disentanglement strategy is proposed to dissect the complexity of learning on full domains and a graph spectral embedding technique is developed to model irregular data on topological spaces;
    \item Salient analytical properties of our model that can serve as regularization for STTD are provided, including high-frequency 
    encoding, implicit low-rankness, and inherent smoothness;
    \item Our comprehensive experiments provide the first practice that demonstrates the effectiveness of INRs on extensive real-world STTD learning problems, covering data from micro to macro scales.
\end{enumerate}

The remainder of this paper is organized as follows. Section \ref{Literature review} reviews the existing work on STTD according to different tasks. 
Section \ref{methodology} introduces the STTD learning problem, elaborates on our method, and performs a theoretical analysis. Section \ref{Sec: experiments} evaluates it on several real-world scenarios. Section \ref{Sec: Algorithmic analysis} discusses the modular designs in detail. Section \ref{conclusions} concludes this work and provides future directions.

\section{Related Works}\label{Literature review}
Existing approaches for STTD span multiple research areas and are briefly discussed in this section to recapitulate the motivation of our paper. In particular, we detail low-rank methods for traffic data reconstruction as a representative. We also briefly review the recent advances of INRs in the machine learning community.

\subsection{Spatiotemporal traffic data modeling}
Since STTD can be generated on various scales of transportation systems, it has a wide spectrum of applications. 
Current studies usually combine data-driven models with STTD to achieve a particular modeling purpose. Representative tasks include: {extracting high-granularity vehicle trajectories for highway simulation \citep{shi2021video}},
data reconstruction on road segment \citep{wang2018traffic,bae2018missing}, graph-based sensor data imputation \citep{deng2021graph}, deep learning-based imputation \citep{shi2021physics,liang2022spatial,liang2022memory,nie2024imputeformer}, prediction of unmeasured traffic on urban road segment \citep{zhang2020network}, highway speed extrapolation (kriging) \citep{wu2021inductive,nie2023correlating}, highway traffic volume estimation \citep{nie2023towards}, transfer-based volume estimation \citep{zhang2022full}, 
anomaly detection \citep{qin2019probdetect,wang2021hankel}, data denoising \citep{zheng2024recovering}, learning relational structures \citep{lei2022bayesian}, dynamics modeling and analysis \citep{avila2020data,lehmberg2021modeling,wang2023anti}, forward and backward dynamics modeling \citep{thodi2024fourier}, real-time traffic forecasting \citep{yang2021real}, origin-destination flow forecasting \citep{zhang2021short,cheng2022real}, individual trajectory reconstruction \citep{chen2024macro}, reconstruction of recurrent spatial-temporal traffic states at intersections \citep{wang2024traffic}, corridor speed field estimation \citep{wang2023low}, link travel time estimation \citep{li2023filtering,fu2023optimization}, estimating network-wide speed matrix \citep{liu2019spatial,yu2020urban}, 
{probabilistic imputation and prediction \citep{xu2023agnp}},
macroscopic network state estimation \citep{saeedmanesh2021extended}, empirical analysis of large-scale multimodal network \citep{fu2020empirical}.
{In general, existing studies on this topic either focus on data-centric problems related to the properties and quality of the data itself, or address a domain-specific problem in real-world traffic engineering.}

\subsection{Low-rank models for spatiotemporal traffic data}
Among the STTD modeling tasks, sensor-based traffic data imputation has attracted particular interest in recent years.
In this problem, low-rank models are preferred for their simplicity and interpretability. The multivariate sensor time series is first organized into a spatiotemporal matrix or tensor, then low-rank models such as matrix factorization, nuclear norm minimization, tensor factorization, and tensor completion are adopted to fill in the missing data. A pioneering work \citep{tan2013tensor} introduced a Tucker tensor model to model multidimensional traffic data. \cite{asif2016matrix} studied the effectiveness of different matrix- and tensor-based methods to estimate missing data in traffic systems. After these works, numerous studies have emerged to develop more advanced low-rank surrogates \citep{goulart2017traffic,zhang2019missing,chen2019missing,chen2019bayesian,chen2020nonconvex,chen2021scalable,nie2022truncated,xing2023customized}, propose new regularization schemes \citep{wang2018traffic,chen2022laplacian,nie2023correlating}, and integrate them with time series processing frameworks \citep{chen2021Autoregressive,wang2023low,lyu2024tucker,mei2024high,nie2024channel}. {Although low-rank models are efficient for dealing with high-dimensional STTD, existing low-rank methods can only fitted to a single data source with fixed resolution due to the discrete organization. In addition, the elaborated regularization scheme is task-oriented, thereby sacrificing its flexibility in different problems.}

\subsection{Implicit data and function representation}
INRs are an emerging paradigm for representing data and functions in machine learning community, such as images, videos, 3D scenes, point cloud, and audio data \citep{SIREN,FourierFeature,mildenhall2021nerf,chen2021learning,dupont2021coin}. In particular, \cite{SIREN} proposed a simple-yet-effective periodic activation to help INRs learn high-frequency features. \cite{FourierFeature} provided a in depth discussion on the role of Fourier features in deep neural networks using the theory of the neural tangent kernel. \cite{mildenhall2021nerf} developed a neural radiance fields model with positional encoding to synthesize novel views of complex scenes.
A recent work \citep{luo2023low} extended INRs to tensor structures and proposed a tensor functional representation for visual and point-cloud data.
INRs have also been extended to model time series, termed time-index models \citep{fons2022hypertime,naour2023time,woo2023learning}. 
{Existing efforts in advancing INRs mainly concentrate on multi-modal data in machine learning tasks.}
However, the application of INRs in spatiotemporal data, especially for traffic data, is still lacking. {As far as we know, there are no existing INR-based methods that can be directly applied to the problem studied in this paper.}

\section{ST-INR for Generalized Traffic Dynamics Learning}\label{methodology}
To begin with, this section first introduces the key concepts and notation used throughout the paper. 
Regarding the notation, we follow the terminology of \citep{kolda2009tensor}. Specifically, matrices are denoted by boldface capital letters e.g., $\mathbf{A}\in\mathbb{R}^{M\times N}$, vectors are represented by boldface lowercase letters, e.g., $\mathbf{a}\in\mathbb{R}^{M}$ and scalars are lowercase letters, e.g., $a$. Without ambiguity, functions are abbreviated as $\Phi(\cdot)$. Without special remarks, we use calligraphic letters to denote the vector space, e.g., $\mathcal{X}\subseteq\mathbb{R}$. In particular, loss function is signified as $\mathcal{L}(\cdot)$.
Next, we will elaborate the proposed spatiotemporal implicit neural representation (ST-INR) model. We will begin by introducing the generalized modeling framework from a broad perspective. Then we detail each modular design to account for the characterization of STTD. Finally, we provide theoretical discussions to indicate some important properties that are crucial for real-world applications. The overall architecture of the proposed ST-INR is shown in Fig. \ref{fig:overall_arch} (three-dimensional data is used as an example).

\begin{figure}[!htb]
\centering
\includegraphics[width=0.99\textwidth]{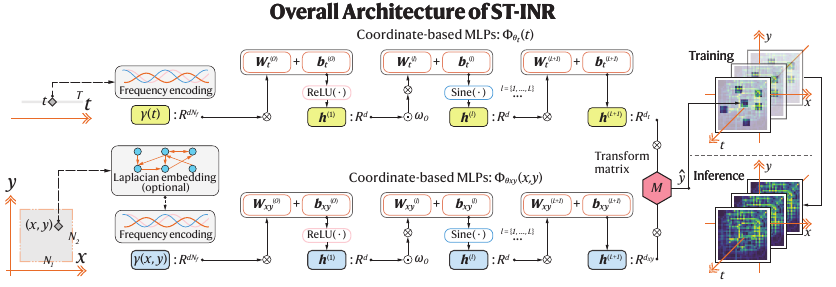}
\caption{\textbf{Overall architecture of the proposed ST-INR model (three-dimensional case).}}
\label{fig:overall_arch}
\end{figure}

\subsection{Generalized traffic data leaner through implicit neural representation}
STTD usually exhibits explicit patterns, e.g., the temporal dynamics of traffic time series can show continuity, periodicity, and nonstationarity. However, the collected real-world traffic data often shows complex regularities due to undesirable observational conditions. It often suffers from sparse and noisy data, contains anomalies, and can be partially measured. To understand and exploit STTD to characterize traffic, practitioners resort to data-driven models to learn, reconstruct, denoise, analyze, and forecast partially observed STTD.

Generally, STTD can be generated from a complex process involving their locations in space and time domain. For instance, vehicle trajectories are defined by $x-t$ coordinates. Origin-destination flows include both the $x-y$ locations and the time point $t$. Sensor-based time series is associated with their relational position and time interval.
On top of this understanding, the traffic dynamics are assumed to obey a system of equations:
\begin{equation}\label{eq:inr}
    \mathcal{F}(\mathbf{x},\Phi,\nabla_{\mathbf{x}}\Phi,\nabla^2_{\mathbf{x}}\Phi,\dots)=0,
\end{equation}
where $\mathbf{x}$ is the spatial (or spatial-temporal) coordinates of the input domain, and the function $\Phi:\mathbf{x}\mapsto\Phi(\mathbf{x})$ maps the coordinates to some quantities of interest, such as the traffic states at some spatiotemporal points.
Since $\Phi$ is defined by the constraint $\mathcal{F}$ (which possibly contains the derivatives), it is \textit{implicitly} modeled and usually parameterized by a deep neural network $\Phi_{\theta}$ \citep{SIREN}. As a result, $\Phi_{\theta}$ defines the implicit neural representation (INR) to approximate the explicit solution to Eq. \eqref{eq:inr}. Given the true quantity $f(\mathbf{x})$ at any location, INR can serve as a data-driven \textbf{learner} that can model the regularities of various traffic data by minimizing the continuous loss function in the entire definition space:
\begin{equation}
    \min_{\theta} \mathcal{L} = \int_{\mathcal{D}}\Vert\mathcal{F}(f(\mathbf{x}),\Phi_{\theta},\nabla_{\mathbf{x}}\Phi_{\theta},\nabla^2_{\mathbf{x}}\Phi_{\theta},\dots)\Vert d\mathbf{x},
\label{eq:inr_intro}
\end{equation}
where $\mathcal{D}$ is the definition domain of $\mathbf{x}$. Due to its differentiable property, Eq. \eqref{eq:inr_intro} can be optimized by the gradient descent method.
With the spatiotemporal context, the learning problems of STTD on observed training data can be instantiated as a special form of Eq. \eqref{eq:inr_intro} parameterized by $\theta\in\Theta$:
\begin{equation}\label{eq:general_model}
\min_{\theta}\mathcal{L}\left(\Omega(\Phi_{\theta}(\mathbf{x},\mathbf{t})),\mathbf{y}\right)+\lambda\mathcal{R}(\Phi_{\theta}(\mathbf{x},\mathbf{t})),
\end{equation}
where $\Phi_{\theta}:\mathcal{X}\subseteq\mathbb{R}^{N_x}\times\mathcal{T}\subseteq\mathbb{R}^{N_t}\mapsto\mathbb{R}^{D_{\text{out}}}$ is the implicit mapping function, i.e., the spatiotemporal learner, $\mathbf{x}\in\mathcal{X},\mathbf{t}\in\mathcal{T}$ are the space-time coordinates,
$\mathbf{y}\in\mathbb{R}^{D_{\text{out}}}$ is the observation that can be partially measured or noisy,
$\Omega$ is a sampling operator, $\mathcal{R}(\cdot)$ is a regularization defined on the continuous space to impose some priors over $\Phi_{\theta}$, and $\lambda$ is the weight hyperparameter.
Importantly, we treat STTD as a vector field and its regularities can be controlled by some underlying partial differential equations. Therefore, its state at an arbitrary location $x$ and time $t$ is subject to both spatial and temporal coordinates.
For discrete problems, the definition domains can be discrete sets, e.g., $\mathcal{X}=\{0,1,2,\dots\}$.
For continuous problems, a continuous domain is considered, e.g., $\mathcal{T}\in\mathbb{R}^+$. {We illustrate the learning objective of Eq. \eqref{eq:general_model} in Fig. \ref{fig:loss_func}.}

\begin{figure}[!htb]
\centering
\includegraphics[width=0.6\textwidth]{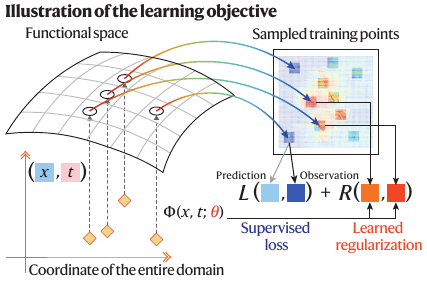}
\caption{{\textbf{Illustration of the learning objective of ST-INR.} The total objective includes a supervised loss over all observed points, and a learnable regularization derived from the property of deep neural networks.}}
\label{fig:loss_func}
\end{figure}

In conventional low-rank models, Eq. \eqref{eq:general_model} is instantiated by two components. The reconstruction loss $\mathcal{L}$ is implemented by the approximation errors of matrix factorization or a surrogate nuclear norm minimization. The regularization term $\mathcal{R}$ is formulated as an explicit penalty function such as total variation, graph Laplacian, and kernel function. In this work, we model them in a unified way by imposing implicit structures of regularization learned from data, which makes our method model-agnostic and adaptive for different patterns.

Drawing inspiration from INRs, we let multi-layer perceptrons (MLPs) be the parameterization $\theta$. With the universal approximation theorem, MLPs with infinite depth can theoretically fit arbitrary continuous functions \citep{hornik1991approximation}. Concretely, taking the two-dimensional STTD as an example, the function representation is expressed as a continuous mapping from $(x,t)$ coordinates to traffic state variables:
\begin{equation}
\begin{aligned}
    &\Phi_{\theta}(x,t): (x,t)\mapsto\mathcal{F}_{\theta}(x,t)\in\mathbb{R}^{D_{\text{out}}},\\
    &\mathcal{F}_{\theta}(x,t)=\mathbf{W}^{(L+1)}(\phi^{(L)}\circ\phi^{(L-1)}\circ\cdots\circ\phi^{(0)})([x,t])+\mathbf{b}^{(L+1)}, \\
    &\mathbf{x}^{(\ell+1)}=\phi^{(\ell)}(\mathbf{x}^{(\ell)})=\sigma(\mathbf{W}^{(\ell)}\mathbf{x}^{(\ell)}+\mathbf{b}^{(\ell)}), \\
\end{aligned}
\end{equation}
where $\mathcal{F}_{\theta}$ is the coordinate-based MLPs \citep{SIREN}, $\mathbf{x}^{(\ell)}$ is the layerwise input with $\mathbf{x}^{(0)}=[x,t]$, $\sigma$ is the elementwise activation function, and $\theta=\{\mathbf{W}^{(\ell)},\mathbf{b}^{(\ell)}|\ell=0,\dots,L\}\cup\{\mathbf{W}^{(L+1)},\mathbf{b}^{(L+1)}\}$ are the model parameters need to be inferred.
As an example, we can represent the speed field on regular mesh grid $\mathbf{X}\in\mathbb{R}^{n\times t}$ with an INR $\Phi_{\theta}(x,t):\mathbb{R}\times\mathbb{R}\mapsto\mathbb{R}$ that satisfies $\Phi_{\theta}(i,j)=\mathbf{X}[i,j],i\in\{0,1,\dots,n\},j\in\{0,1,\dots,t\}$. After learning the parameters, INR can predict the speed value at arbitrary coordinates $(x,t)\in\mathbb{R}\times\mathbb{R}$, even beyond the original discrete grid $[i,j]$.

\subsection{Encoding high-frequency components in functional representation}
Low-rank models such as MF focus mainly on the utilization of low-frequency components. They assume that the reconstruction is dominated by a small number of modes. However, we argue that although STTD shows dominating patterns, high-frequency components can encode complex details and structures about STTD. Especially, high-frequency parts can relate with significant local patterns \citep{sen2019think}. Nevertheless, learning the high-frequency tails of the spectrum is a longstanding challenge for machine learning methods. The famous \textit{frequency principle} revealed by Fourier analysis indicates that deep neural networks tend to fit low-frequency components during model training \citep{Frequency-principle}. This is understood as the ``spectral bias'' of deep neural networks \citep{rahaman2019spectral}.
To alleviate this spectral bias, we adopt two advanced techniques to learn high-frequency components. We first denote the spatial-temporal input coordinate as $\mathbf{v}=(x,t)\subseteq\mathbb{R}\times\mathbb{R}^+$.

The first strategy is to equip $\mathcal{F}_{\theta}$ with periodic activation functions \citep{SIREN}:
\begin{equation}\label{eq:siren}
\phi_s^{(\ell)}(\mathbf{v}^{(\ell)})=\sin(\omega_0\cdot\mathbf{W}^{(\ell)}\mathbf{v}^{(\ell)}+\mathbf{b}^{(\ell)}),~\ell=\{1,\dots,L\},
\end{equation}
where $\omega_0$ is a frequency factor to ensure the sine function spans multiple periods over $[-1,1]$, and the weights are initialized as $\mathbf{w}^{(\ell)}_i\in\mathbb{R}^D\sim\mathcal{U}(-\sqrt{6/d},\sqrt{6/d})$ to have a standard deviation of 1 \citep{SIREN}. The sine activation introduces periodicity to the hidden states and its gradient is also a sine function everywhere, which makes the neural networks easily learn complicated patterns. 

The second solution is to use concatenated random Fourier features (CRF) in the input layer \citep{FourierFeature,benbarka2022seeing}. We can select problem-specific Fourier features and incorporate them into the input of $\mathcal{F}_{\theta}$. However, obtaining a predefined frequency can be challenging due to the complex patterns of STTD.
Therefore, unlike the original method in \citep{FourierFeature,benbarka2022seeing}, we employ a series of CRF with different scales in the input layer to explicitly inject the high-frequency components:
\begin{equation}\label{eq:crf}
    \gamma(\mathbf{v})=[\sin(2\pi\mathbf{B}_1\mathbf{v}),\cos(2\pi\mathbf{B}_1\mathbf{v}),\dots,\sin(2\pi\mathbf{B}_{N_f}\mathbf{v}),\cos(2\pi\mathbf{B}_{N_f}\mathbf{v})]^{\mathsf{T}}\in\mathbb{R}^{d{N_f}},
\end{equation}
where the Fourier basis frequency $\mathbf{B}_k\in\mathbb{R}^{d/2\times c_{\text{in}}}$ are sampled from Gaussian distribution $\mathcal{N}(0,\sigma_{k}^2)$. By setting a large number of frequency features $N_f$ and a series of scale parameters $\{\sigma^2_k\}_{k=1}^{N_f}$, we can sample a variety of frequency patterns in the input domain, which is crucial for the reconstruction of complex patterns (see Section \ref{sec:frequency_analysis}). In addition, having diverse frequency features enables our model to be adaptive to different input data, simplifying the process of selecting a specific frequency values.
The rationale of using Fourier feature mapping can be explained by the theory of neural tangent kernel (NTK). \cite{FourierFeature} show that CRF can transform the effective NTK into a stationary kernel with a tunable bandwidth, enabling faster convergence for high-frequency components.

The overall workflow of a single frequency-enhanced $\mathcal{F}_{\theta}:\widehat{\mathbf{y}}=\Phi_{\theta}(\mathbf{v})$ can be formulated as follows:
\begin{equation}\label{eq:freqmlp}
\begin{aligned}
    &\mathbf{h}^{(1)} = \texttt{ReLU}(\mathbf{W}^{(0)}\gamma(\mathbf{v})+\mathbf{b}^{(0)}), \\
    &\mathbf{h}^{(\ell+1)} =\phi_s^{(\ell)}(\mathbf{h}^{(\ell)})= \sin(\omega_0\cdot\mathbf{W}^{(\ell)}\mathbf{h}^{(\ell)}+\mathbf{b}^{(\ell)}),~\ell=\{1,\dots,L\}, \\
    &\widehat{\mathbf{y}} = \mathbf{W}^{(L+1)}\mathbf{h}^{(L+1)}+\mathbf{b}^{(L+1)},
\end{aligned}
\end{equation}
where $\mathbf{W}^{(\ell)}\in\mathbb{R}^{d_{(\ell+1)}\times d_{(\ell)}},\mathbf{b}^{(\ell)}\in\mathbb{R}^{d_{(\ell+1)}}$ are layerwise parameters, and $\widehat{\mathbf{y}}\in\mathbb{R}^{d_{\text{out}}}$ is the predicted value. The combination of these two strategies achieves high-frequency low-dimensional regression, empowering the coordinate-based MLPs to learn complex details with high resolution.

\subsection{Factorizing spatial-temporal variability through coordinate disentanglement}\label{sec:factorization}
Using a single $\Phi_{\theta}$ to model entangled spatial-temporal interactions within a continuous domain can be challenging. Therefore, we decompose the spatiotemporal process into separate variables in two dimensions using coordinate disentanglement.
Specifically, we factorize the definition domain of $\Phi_{\theta}$ into two axes:
\begin{equation}\label{eq:factorize}
    \begin{aligned}
          &\Phi_{\theta}(\mathbf{v}) = \Phi_{\theta_x}(x)\mathbf{M}_{xt}\Phi_{\theta_t}(t)^{\mathsf{T}}, \\
         &\Phi_{\theta_x}:\mathcal{X}\mapsto\mathbb{R}, ~x\mapsto\Phi_{\theta_x}(x)\in\mathbb{R}^{d_x}, \\
          &\Phi_{\theta_t}:\mathcal{T}\mapsto\mathbb{R}, ~t\mapsto\Phi_{\theta_t}(t)\in\mathbb{R}^{d_t}, \\
    \end{aligned}
\end{equation}
where $\Phi_{\theta_x}$ and $\Phi_{\theta_t}$ are defined by Eq. \eqref{eq:freqmlp}.
To further align the two components, which can have different dimensions and frequency patterns, we adopt a middle transform matrix $\mathbf{M}_{xt}\in\mathbb{R}^{d_x\times d_t}$ to model their interactions in the hidden manifold.
This formulation makes Eq. \eqref{eq:factorize} a generalized and continuous matrix factorization model. When the spatial-temporal coordinates ${x}$ and ${t}$ are defined on a discrete grid, a sampled matrix $\mathbf{Y}$ can be given as $\{\mathbf{Y}[i,j]=\Phi_{\theta_x}(x^{(i)})\Phi_{\theta_t}(t^{(j)})^{\mathsf{T}},\forall i,j\in\mathbb{N}_+\}$, which intrinsically constructs a discrete matrix factorization model. On the contrary, it can process data or functions that exist beyond the regular mesh grid of matrices, capturing complex spatiotemporal patterns using the expressive power of deep learning.


In practice, we can set $d_x=d_t=d_{xt}$ as the hidden dimension. $\mathbf{M}_{xt}$ is learned at the same time as other parameters in $ \Phi_{\theta_x}$ and $\Phi_{\theta_t}$ and we find that initializing it as an identity matrix can result in a good nontrivial solution. A significant difference between our model and existing low-rank models is that we adopt a large factorization dimension, i.e., the hidden dimension $d_{xt}$, to ensure the learning of a wider singular spectrum while at the same time encouraging a low-rank solution through the \textbf{implicit low-rank regularization} brought by gradient descent \citep{arora2019implicit}.
Important discussions about the two hidden dimensions are provided in Section \ref{sec:implicit_low_rank}.
It is also noteworthy that in this factorized formulation, the temporal and spatial components can adopt different frequency features, thereby learning different implicit mappings to encode complex data patterns. This coordinate disentanglement strategy also brings efficiency benefits (see Section \ref{sec:complexity}).

\subsection{Learning traffic data in arbitrary domains}\label{Sec: graph-embedding}
The mapping $\Phi_{\theta}$ enables us to parameterize the function spaces over traffic data. With this powerful and flexible tool, we seek to learn a wide range of traffic data, such as individual trajectory, sensor-based time series, dynamic origin-destination flow, and network traffic state, as a continuous function $\Phi_{\theta}$. However, in reality, traffic data can be sampled and stored in an irregular way. For example, a sensor network can be abstracted as a weighted directed graph where vertices denote detectors of the network, and edges indicate the connectivity (reachability) between sensors. In the Euclidean case, we already know the regularity of the data domain, e.g., $\mathcal{X},\mathcal{T}\subseteq\mathbb{R}$, and INRs can be readily trained on a discrete sampling of a signal over a regular lattice. 
However, when there are traffic data on an arbitrary topological space $\mathcal{H}$, instead of a regular lattice such as the $x-t$ coordinate, we cannot represent $\mathcal{H}$ in a canonical coordinate system. 
To ease the difficulty, we assume that we can obtain a discrete graph realization $\mathcal{G}$ sampled from the unknown topological space of the continuous signal. We then approach the non-Euclidean data modeling problem by projecting it to the Laplacian eigenvector space.


Specifically, in this generalized setting, we consider a graph signal $f:\mathcal{H}\mapsto\mathcal{Y}$ with its discrete graph realization $\mathcal{G}(V,E,\mathbf{A})$. The basic elements are node set $V=\{v_i|i=1,\dots,n\}$, edge set $E\subseteq V\times V$ and weighted adjacency matrix $\mathbf{A}\in\mathbb{R}^{n\times n}$ (undirected). 
We resort to the graph spectral embedding technique \citep{grattarola2022generalised}, bypassing the need to know the continuous space underlying the traffic graph signals. 
Since the graph describes the relations between graph signals, we can obtain a meaningful coordinate system by manipulating the spectral embedding of the graph Laplacian \citep{grattarola2022generalised}. Given the adjacency matrix $\mathbf{A}$, the symmetric normalized Laplacian as well as its eigendecomposition are given as:
\begin{equation}
\begin{aligned}
    \mathbf{L}_{\text{s}}=\mathbf{I}-\mathbf{D}^{-\frac{1}{2}}\mathbf{A}\mathbf{D}^{-\frac{1}{2}}=\mathbf{U}^{\mathsf{T}}\operatorname{diag}(\lambda_1,\dots,\lambda_n)\mathbf{U}, 
\end{aligned}
\label{Laplacian}
\end{equation}
where $\mathbf{D}=\operatorname{diag}(\sum_{i'}a_{ii'})$ is the degree matrix, $\mathbf{U}$ is the matrix whose column is the eigenvectors and the eigenvalues are ordered as $\lambda_1\leq\lambda_2\leq\dots,\lambda_n$. Correspondingly, the associated eigenvectors form an orthonormal basis $\{\mathbf{u}_j\in\mathbb{R}^n,j=1,\dots,n\}$, which is treated as the coordinate in the Laplacian eigenspace for each node $v_j$:
\begin{equation}\label{eq:graph_embedding}
    \mathbf{e}_j=[u_{1,j},u_{2,j},\dots,u_{n,j}]^{\mathsf{T}}\in\mathbb{R}^{n}.
\end{equation}

In practice, we truncate the first $k$ values as the final embedding $\widetilde{\mathbf{e}}_j[:k]\in\mathbb{R}^{k}$ to reduce complexity. 
Similar to the routine of regular INRs, a neural network $\Phi_{\theta}:\widetilde{\mathbf{e}}_j\mapsto f(v_j)$ is trained to map the spectral embedding of each node (coordinate in the Laplacian eigenspace) to the graph signal value. 
After model training, we can calculate the corresponding spectral embedding to infer the graph signal values for any node, as long as we can sample from the discrete graphs. Note that eigenvectors of the graph Laplacian are a discrete approximation of the continuous eigenfunctions of the Laplace operator on $\mathcal{H}$ \citep{GSP}, which forms a natural extension from the input in the Euclidean domain to the non-Euclidean domain. Importantly, if the signals are defined on spatiotemporal graphs, such as sensor-based time series, it can also be transformed into a vertex-time factorization formulation, just as Eq. \eqref{eq:factorize}.

\begin{algorithm}[!t]
\caption{Spatiotemporal implicit neural representations of traffic data}\label{algo:stinr}
\KwIn{Network parameter $\theta$, partially measured dataset $\mathcal{D}=\{(\mathbf{v}_i,\mathbf{y}_i)\}_{i=1}^M$.}
\KwOut{Trained ST-INR model $\Phi_{\theta}$ and the predictions at queried coordinates $\{\widehat{\mathbf{y}}_i\}_{i=1}^{M^{\star}}$.}
\tcp{Model training stage}
Initialize $\Phi_{\theta_r}, \forall r\in\{1\dots n_{c_{\text{in}}}\}$ and the transform tensor $\mathcal{M}$; \\
Sample the basis frequency matrices $\mathbf{B}_k\sim\mathcal{N}(0,\sigma_k^2),~\forall k\in\{1,\dots,N_f\}$; \\
\While{not convergence}{
Sample a batch $\mathcal{B}$ of data paris $\{(\mathbf{v}_j,\mathbf{y}_j)\}_{j\in\mathcal{B},\mathcal{B}\subseteq \mathcal{D}}$; \\
\tcp{Forward process}
\For{$i\in \mathcal{B}$}{
$\Phi_{\theta}(\mathbf{v}_i) \leftarrow \mathcal{M}\times_1\Phi_{\theta_1}(v_1)\times_2\cdots\times_{c_{\text{in}}}\Phi_{\theta_{c_{\text{in}}}}(v_{c_{\text{in}}})$; \\}
\tcp{Gradient descent for updating parameters} 
$\theta\leftarrow \theta - \eta \nabla_{\theta}\frac{1}{|\mathcal{B}|}\sum_{i\in\mathcal{B}}\Vert\mathbf{y}_i-\Phi_{\theta}(\mathbf{v}_i) \Vert_2^2$;\\
}
\tcp{Model inference stage}
Evaluate $\Phi_{\theta}$ at given queried coordinates within the definition domain $\{\mathbf{v}_i^{\star}\}_{i\in M^{\star}}$;\\
\For{$i\in M^{\star}$}{
$\widehat{\mathbf{y}}_i\leftarrow\Phi_{\theta}(\widehat{\mathbf{v}}_i^{\star}) \leftarrow \mathcal{M}\times_1\Phi_{\theta_1}(v_1^{\star})\times_2\cdots\times_{c_{\text{in}}}\Phi_{\theta_{c_{\text{in}}}}(v_{c_{\text{in}}}^{\star})$; \\}
\end{algorithm}

\subsection{High-dimensional extensions and gradient-based optimization}
\label{Appendix: tensor}
Eq. \eqref{eq:factorize} describes the two-dimensional case that can model an arbitrarily spatial-temporal traffic data matrix. For higher-dimensional structures, such as the time-varying origin-destination flows and grid-based traffic states, we can represent them in an expanded formulation.
Taking inspiration from \citep{luo2023low}, we can organize the multidimensional data into a generalized Tucker tensor \citep{kolda2009tensor} format. 
Different from the conventional Tucker model defined on discrete grids, ST-INR are fully defined on continuous input domains and the factor matrices are parameterized by INRs.
Specifically, given a single input-output data pair $(\mathbf{v},\mathbf{y})$ where $\mathbf{v}\in\mathbb{R}^{c_{\text{in}}}$ is the $c_{\text{in}}$ dimensional input coordinate and $\mathbf{y}\in\mathbb{R}^{c_{\text{out}}}$ is the corresponding true data value. The ST-INR model for high-dimensional data can be formulated as:
\begin{equation}\label{eq:full_model}
\begin{aligned}
    &\Phi_{\theta}(\mathbf{v}) = \mathcal{M}\times_1\Phi_{\theta_1}(v_1)\times_2\cdots\times_{c_{\text{in}}}\Phi_{\theta_{c_{\text{in}}}}(v_{c_{\text{in}}}), \forall (v_1,v_2,\dots,v_{c_{\text{in}}})\in\mathbf{v},\\
    &\Phi_{\theta_i}: v_i\mapsto \Phi_{\theta_i}(v_i)\in\mathbb{R}^{n_i},~\forall i\in\{1,\dots,c_{\text{in}}\},
\end{aligned}
\end{equation}
where $\mathcal{M}\in\mathbb{R}^{n_1\times \dots n_{c_{\text{in}}}}$ is the core tensor, and $\theta=\{\theta_1,\theta_2,\dots,\theta_{c_{\text{in}}}\}\cup\mathcal{M}$ are the model parameters. Similarly, each $\Phi_{\theta_i}$ can be defined on a topological space with different base frequencies. The factorized size $n_i$ can also be full-dimensional, as described in Section \ref{sec:factorization}.

Given a data instance, for example, a bounded spatiotemporal speed contour, we can sample a set containing $M$ collocation pairs $\mathcal{D}=\{(\mathbf{v}_i,\mathbf{y}_i)\}_{i=1}^M$ where $\mathbf{v}_i\in\mathbb{R}^{c_{\text{in}}}$ is the input coordinate and $\mathbf{y}_i\in\mathbb{R}^{c_{\text{out}}}$ is the true data value. For example, if the speed contour has $N$ space grids and $T$ time points, then we have $M=NT$, $c_{\text{in}}=2$, and $c_{\text{out}}=1$.
Given training data $\mathbf{x}$ and the model $\Phi_{\theta}$, we can evaluate $\Phi_{\theta}$ in Eq. \eqref{eq:full_model} at all collocation points. Then Eq. \eqref{eq:general_model} can be instantiated in the following form:
\begin{equation}\label{eq:single_inr}
    \min_{\theta}\mathcal{L}(\theta;\mathcal{D})=\frac{1}{M}\sum_{i=1}^M\Vert\mathbf{y}_i-\Phi_{\theta}(\mathbf{v}_i) \Vert_2^2.
\end{equation}

Note that the regularization $\mathcal{R}$ in Eq. \eqref{eq:general_model} is learned implicitly from the data instance, rather than an explicit penalty term. We will highlight the benefits of implicit regularization in Section \ref{Sec: theoretical_analysis}.

The above equation is fully differentiable, we can optimize it using gradient-based optimizer, i.e.,
\begin{equation}
    \theta_{k+1}\leftarrow \theta_k - \eta \nabla_{\theta}\mathcal{L}(\theta,\{(\mathbf{v}_j,\mathbf{y}_j)\}_{j\in\mathcal{B},\mathcal{B}\subseteq \mathcal{D}})|_{\theta=\theta_k},
\end{equation}
where $k$ is the gradient step, $\eta$ is the learning rate, and $\mathcal{B}$ is the data batch. More advanced optimizers such as Adam can also be adopted.
Notably, we will discuss the benefits of gradient descent over Eq. \eqref{eq:single_inr} in Section \ref{sec:implicit_low_rank}.

In summary, the overall workflow of ST-INR are shown in Algorithm \ref{algo:stinr}.

\subsection{Theoretical Analysis}\label{Sec: theoretical_analysis}
In addition to the data learning power of the proposed model, this section presents several salient features that are important for real-world traffic data applications, including \textit{encoding of high-frequency structures}, \textit{implicit low-rank regularization}, \textit{inherent smoothness}, and \textit{reduced complexity}. 
The proposed ST-INR explicitly encodes high-frequency structures to learn complex details of various STTD while at the same time implicitly learning low-rank and smooth priors from data to reconstruct the dominating modes.

\subsubsection{High frequency encodings}
The key rationale for INRs is the exploitation of high-frequency structures in data signals. Based on the theory of neural tangent kernel (NTK) \citep{NTK,FourierFeature}, we first indicate that the Fourier features in Eq. \eqref{eq:crf} introduce a composed NTK that is beneficial for the convergence of neural networks to high-frequency components, providing a solution to the spectral bias of standard MLPs.

\begin{lemma}[Neural network dynamics through NTK \citep{NTK}]\label{lemma:ntk}
 Let $f$ become a deep neural network with parameters $\theta$, the training dynamics of it can be approximated by the NTK defined as: $k_{\text{NTK}}(\mathbf{x}_i,\mathbf{x}_j)=\mathbb{E}_{\theta}\langle\frac{\partial f(\mathbf{x}_i;\theta)}{\partial\theta},\frac{\partial f(\mathbf{x}_j;\theta)}{\partial\theta} \rangle$.   
When considering the input in a hypersphere, the NTK reduces to a dot product kernel $k_{\text{NTK}}(\mathbf{x}_i,\mathbf{x}_j)=h_{\text{NTK}}(\mathbf{x}_i^{\mathsf{T}}\mathbf{x}_j)$.
Then for a MLP trained with an L2 loss and a learning rate $\eta$, the network's output for test data $\mathbf{X}_{\text{test}}$ after $t$ steps of training on a training dataset $\{\mathbf{X},\mathbf{y}\}$ can be approximated as:
\begin{equation}
    \widehat{\mathbf{y}}^{(t)} = f(\mathbf{X}_{\text{test}};\theta)\approx\mathbf{K}_{\text{test}}\mathbf{K}^{-1}(\mathbf{I}-e^{-\eta\mathbf{K}t})\mathbf{y},
\end{equation}
where $\mathbf{K}$ is the kernel matrix between all data pairs of training data with $k_{i,j}=k_{\text{NTK}}(\mathbf{x}_i,\mathbf{x}_j)$, and $\mathbf{K}_{\text{test}}$ is the NTK matrix between all pairs of testing data.
On top of this approximation, we are interested in the behaviors of model in training convergence. Consider the eigendecomposition $\mathbf{K}=\mathbf{Q}\mathbf{\Lambda}\mathbf{Q}^{\mathsf{T}}$, the projected absolute error of training loss is given by:
\begin{equation}
    |\mathbf{Q}^{\mathsf{T}}(\widehat{\mathbf{y}}^{(t)}_{\text{train}}-\mathbf{y})|\approx|\mathbf{Q}^{\mathsf{T}}((\mathbf{I}-e^{-\eta\mathbf{K}t})\mathbf{y}-\mathbf{y})|=|e^{-\eta\mathbf{\Lambda}t}\mathbf{Q}^{\mathsf{T}}\mathbf{y}|.
\end{equation}
This indicates that the absolute training loss will decrease approximately exponentially at the rate $\eta\lambda$, where the larger eigenvalues will be fitted faster than those smaller ones. A kernel with rapidly decaying spectrum fails to converge to high-frequency components. Instead, a wider kernel with slower spectral attenuation can achieve easier convergence.

\end{lemma}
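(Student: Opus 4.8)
The plan is to establish the result via the \emph{lazy training} linearization that underlies NTK theory, treating the width of $f$ as large enough that the network behaves as a linear model in its parameters throughout training. First I would linearize $f$ around its initialization $\theta_0$ by a first-order Taylor expansion, $f(\mathbf{x};\theta)\approx f(\mathbf{x};\theta_0)+\nabla_\theta f(\mathbf{x};\theta_0)^{\mathsf{T}}(\theta-\theta_0)$, and invoke the central fact from \citep{NTK} that in the infinite-width limit the Jacobian $\nabla_\theta f$ stays essentially frozen at $\theta_0$, so the kernel $k_{\text{NTK}}(\mathbf{x}_i,\mathbf{x}_j)=\mathbb{E}_\theta\langle\partial_\theta f(\mathbf{x}_i),\partial_\theta f(\mathbf{x}_j)\rangle$ is constant along the trajectory. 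The hypersphere reduction then follows from the rotation invariance of the Gaussian initialization: when $\|\mathbf{x}_i\|=\|\mathbf{x}_j\|=1$, every layerwise expectation depends on the inputs only through $\mathbf{x}_i^{\mathsf{T}}\mathbf{x}_j$, collapsing $k_{\text{NTK}}$ to the dot-product kernel $h_{\text{NTK}}(\mathbf{x}_i^{\mathsf{T}}\mathbf{x}_j)$.

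Next I would pass to continuous-time gradient flow on the $L2$ objective $\mathcal{L}=\tfrac12\|f(\mathbf{X};\theta)-\mathbf{y}\|_2^2$, so that $\dot\theta=-\eta\nabla_\theta\mathcal{L}=-\eta\,\mathbf{J}^{\mathsf{T}}(f(\mathbf{X};\theta)-\mathbf{y})$ with $\mathbf{J}=\nabla_\theta f(\mathbf{X};\theta)$. Writing the training residual $\mathbf{r}^{(t)}=f(\mathbf{X};\theta_t)-\mathbf{y}$ and applying the chain rule together with the frozen-Jacobian fact ($\mathbf{K}=\mathbf{J}\mathbf{J}^{\mathsf{T}}$) yields the closed linear ODE
\begin{equation}
\dot{\mathbf{r}}^{(t)}=-\eta\,\mathbf{K}\,\mathbf{r}^{(t)},\qquad \mathbf{K}_{ij}=k_{\text{NTK}}(\mathbf{x}_i,\mathbf{x}_j),
\end{equation}
whose solution is $\mathbf{r}^{(t)}=e^{-\eta\mathbf{K}t}\mathbf{r}^{(0)}$. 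Under the standard simplification that the output at initialization is negligible, $f(\mathbf{X};\theta_0)\approx\mathbf{0}$, this gives $f(\mathbf{X};\theta_t)\approx(\mathbf{I}-e^{-\eta\mathbf{K}t})\mathbf{y}$. Propagating the same dynamics to a test point through the cross-kernel $\mathbf{K}_{\text{test}}=\mathbf{J}_{\text{test}}\mathbf{J}^{\mathsf{T}}$ and integrating $\int_0^t e^{-\eta\mathbf{K}s}ds=\tfrac1\eta\mathbf{K}^{-1}(\mathbf{I}-e^{-\eta\mathbf{K}t})$ produces the stated formula $\widehat{\mathbf{y}}^{(t)}\approx\mathbf{K}_{\text{test}}\mathbf{K}^{-1}(\mathbf{I}-e^{-\eta\mathbf{K}t})\mathbf{y}$.

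The spectral statement then comes from diagonalizing the symmetric positive-semidefinite Gram matrix as $\mathbf{K}=\mathbf{Q}\mathbf{\Lambda}\mathbf{Q}^{\mathsf{T}}$ and using $e^{-\eta\mathbf{K}t}=\mathbf{Q}e^{-\eta\mathbf{\Lambda}t}\mathbf{Q}^{\mathsf{T}}$. Projecting the training residual onto the eigenbasis gives $\mathbf{Q}^{\mathsf{T}}\mathbf{r}^{(t)}=e^{-\eta\mathbf{\Lambda}t}\mathbf{Q}^{\mathsf{T}}\mathbf{r}^{(0)}$, and with $\mathbf{r}^{(0)}\approx-\mathbf{y}$ I recover $|\mathbf{Q}^{\mathsf{T}}(\widehat{\mathbf{y}}^{(t)}_{\text{train}}-\mathbf{y})|\approx|e^{-\eta\mathbf{\Lambda}t}\mathbf{Q}^{\mathsf{T}}\mathbf{y}|$, so the $i$-th mode decays as $e^{-\eta\lambda_i t}$ and modes with large eigenvalues are fitted fastest. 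Reading this together with the hypersphere dot-product structure, whose Mercer eigenfunctions are spherical harmonics ordered by frequency, identifies the fast-decaying large-$\lambda$ modes with low frequencies and the slow modes with high frequencies --- the spectral-bias conclusion.

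I expect the genuine obstacle to be the rigorous justification of the frozen-Jacobian claim: making precise the width scaling under which the linearization error stays uniformly small over the whole $t\to\infty$ trajectory is exactly the technical heart of \citep{NTK}, and everything downstream is an exact linear-algebra computation once that approximation is granted. Since the statement is phrased with $\approx$ and is imported from \citep{NTK}, I would cite that regime rather than re-derive the concentration bounds, and concentrate the written argument on the residual ODE and its eigenmode decomposition.
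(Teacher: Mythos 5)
Your proposal is correct, but it is worth knowing that the paper itself proves nothing here: its entire proof of this lemma is the single line ``The proof can be found in \citet{NTK}.'' Your reconstruction---linearization around initialization with a frozen Jacobian, the residual gradient-flow ODE $\dot{\mathbf{r}}^{(t)}=-\eta\mathbf{K}\mathbf{r}^{(t)}$ solved by a matrix exponential, the integrated parameter displacement $\theta_t-\theta_0=\mathbf{J}^{\mathsf{T}}\mathbf{K}^{-1}(\mathbf{I}-e^{-\eta\mathbf{K}t})\mathbf{y}$ giving the test-time formula, and the eigenmode projection for the spectral-bias conclusion---is the standard derivation in the NTK literature (essentially the linearized-dynamics argument of Lee et al., which is also how \citep{FourierFeature} presents this result), and every linear-algebra step checks out. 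What your route buys is self-containedness: the mechanism (modes of $\mathbf{K}$ decaying at rates $e^{-\eta\lambda_i t}$) becomes visible rather than being a black box, and you correctly concentrate the unproved content in the one place it genuinely lives, namely the frozen-Jacobian concentration result that is the technical heart of \citep{NTK}. What the paper's citation-only proof buys is economy and rigor-by-reference, which is defensible since the lemma is stated with $\approx$ throughout. One point where your derivation is actually sharper than the lemma's own wording: you identify $\mathbf{K}_{\text{test}}$ as the test--train cross-kernel $\mathbf{J}_{\text{test}}\mathbf{J}^{\mathsf{T}}$, which is what the formula requires, whereas the statement's description of it as the ``NTK matrix between all pairs of testing data'' is imprecise. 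Two minor caveats you could flag explicitly: the appearance of $\mathbf{K}^{-1}$ presupposes positive definiteness of the NTK Gram matrix (guaranteed on the sphere for non-polynomial activations in \citep{NTK}), and the final identification of large-eigenvalue modes with low frequencies rests on the Mercer/spherical-harmonic structure of dot-product kernels, which is the heuristic the lemma's closing sentences invoke rather than prove.
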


\begin{proof}
    The proof can be found in \citet{NTK}.
\end{proof}

The above lemma states that a standard MLP is difficult to fit high-frequency components, and we can control the spectral bias of deep neural networks through manipulating the kernel to cover a relatively wide spectrum. This is achieved by the CRF in Eq. \eqref{eq:crf} with the following lemma.
\begin{lemma}[Composing NTK with Fourier features \citep{FourierFeature}]\label{lemma:composed_ntk}
Given the Fourier mapping in Eq. \eqref{eq:crf}, the induced NTK has the form:
\begin{equation}
    k_{\gamma}(\gamma(\mathbf{v}_1),\gamma(\mathbf{v}_2))=\gamma(\mathbf{v}_1)^{\mathsf{T}}\gamma(\mathbf{v}_2)=\sum_{j=1}^{N_f}\cos(2\pi\mathbf{B}_j(\mathbf{v}_1-\mathbf{v}_2))\triangleq h_{\gamma}(\mathbf{v}_1-\mathbf{v}_2)),
\end{equation}
which is a shift-invariant kernel, a.k.a., stationary kernel. Then the CRF passed through a MLP activates a composed NTK:
\begin{equation}\label{eq:composed_kernel}
    h_{\text{NTK}}(\mathbf{x}_i^{\mathsf{T}}\mathbf{x}_j)=h_{\text{NTK}}(\gamma(\mathbf{v}_1)^{\mathsf{T}}\gamma(\mathbf{v}_2))=h_{\text{NTK}}(h_{\gamma}(\mathbf{v}_1-\mathbf{v}_2)).
\end{equation}
This result means that Fourier feature mapping of the input coordinates makes the composed NTK stationary, serving as a convolution kernel over the input domain \citep{FourierFeature}.
\end{lemma}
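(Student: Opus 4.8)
The plan is to establish the two displayed identities in sequence: the stationarity of $k_\gamma$ follows from a direct trigonometric computation, and the composition identity then follows by invoking the dot-product reduction of the NTK already recorded in Lemma~\ref{lemma:ntk}. First I would expand the inner product $\gamma(\mathbf{v}_1)^{\mathsf{T}}\gamma(\mathbf{v}_2)$ block by block. Since $\mathbf{B}_j\mathbf{v}$ is a vector to which $\sin$ and $\cos$ are applied componentwise, each scalar row $\mathbf{b}$ of $\mathbf{B}_j$ contributes the pair $\sin(2\pi\mathbf{b}^{\mathsf{T}}\mathbf{v}_1)\sin(2\pi\mathbf{b}^{\mathsf{T}}\mathbf{v}_2)+\cos(2\pi\mathbf{b}^{\mathsf{T}}\mathbf{v}_1)\cos(2\pi\mathbf{b}^{\mathsf{T}}\mathbf{v}_2)$. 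The angle-subtraction identity $\cos(\alpha-\beta)=\cos\alpha\cos\beta+\sin\alpha\sin\beta$ collapses this to $\cos(2\pi\mathbf{b}^{\mathsf{T}}(\mathbf{v}_1-\mathbf{v}_2))$. Summing over the rows of each $\mathbf{B}_j$ and then over $j$ produces exactly $\sum_{j=1}^{N_f}\cos(2\pi\mathbf{B}_j(\mathbf{v}_1-\mathbf{v}_2))$, a quantity that depends on $\mathbf{v}_1$ and $\mathbf{v}_2$ only through their difference; denoting it $h_\gamma(\mathbf{v}_1-\mathbf{v}_2)$ establishes that $k_\gamma$ is shift invariant.

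Next I would connect this to the MLP's NTK. The crucial observation is that the Fourier map sends every input to a sphere of fixed radius: because $\sin^2+\cos^2=1$, one has $\Vert\gamma(\mathbf{v})\Vert^2=N_f\cdot(d/2)$ independently of $\mathbf{v}$. This constant-norm property is precisely the hypothesis under which Lemma~\ref{lemma:ntk} reduces the NTK to the dot-product form $k_{\text{NTK}}(\mathbf{x}_i,\mathbf{x}_j)=h_{\text{NTK}}(\mathbf{x}_i^{\mathsf{T}}\mathbf{x}_j)$. Setting $\mathbf{x}_i=\gamma(\mathbf{v}_1)$ and $\mathbf{x}_j=\gamma(\mathbf{v}_2)$ and substituting the result of the first step yields $h_{\text{NTK}}(\gamma(\mathbf{v}_1)^{\mathsf{T}}\gamma(\mathbf{v}_2))=h_{\text{NTK}}(h_\gamma(\mathbf{v}_1-\mathbf{v}_2))$, which is again a function of $\mathbf{v}_1-\mathbf{v}_2$ alone. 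Since a shift-invariant kernel on the input domain acts as a convolution, this is exactly the claimed interpretation of the composed kernel.

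I expect the main obstacle to be not the trigonometry but the bookkeeping required to justify invoking Lemma~\ref{lemma:ntk}. One must verify that the hypersphere hypothesis genuinely holds for the concatenated map (via the constant-norm computation), possibly after an explicit normalization, and then confirm that the scalar profile $h_{\text{NTK}}$ appearing in Lemma~\ref{lemma:ntk} is the \emph{same} object that one composes with $h_\gamma$ here. In other words, the care lies in checking that composing the input through $\gamma$ only changes the kernel's argument, leaving the outer NTK profile $h_{\text{NTK}}$ intact, so that the final equality $h_{\text{NTK}}(\mathbf{x}_i^{\mathsf{T}}\mathbf{x}_j)=h_{\text{NTK}}(h_\gamma(\mathbf{v}_1-\mathbf{v}_2))$ is a genuine composition rather than a redefinition. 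Everything else reduces to the elementary identity established in the first step.
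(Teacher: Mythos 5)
Your proposal is correct and follows essentially the paper's own route: the paper offers no written proof for this lemma, deferring entirely to \citep{FourierFeature}, and your argument --- the angle-subtraction identity $\cos\alpha\cos\beta+\sin\alpha\sin\beta=\cos(\alpha-\beta)$ applied row-by-row to show that $k_\gamma$ depends only on $\mathbf{v}_1-\mathbf{v}_2$, followed by the constant-norm observation $\Vert\gamma(\mathbf{v})\Vert^2=N_f\,d/2$ that places all mapped inputs on a hypersphere so the dot-product form of the NTK in Lemma~\ref{lemma:ntk} applies with its profile $h_{\text{NTK}}$ unchanged --- is precisely the derivation in that reference. In effect you have supplied the self-contained proof the paper leaves implicit, including the detail (glossed over by the paper's notation) that $\cos(2\pi\mathbf{B}_j(\mathbf{v}_1-\mathbf{v}_2))$ must be read as a sum over the rows of $\mathbf{B}_j$, which your blockwise expansion handles correctly.
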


This lemma shows that CRF transforms a kernel from the dot product into a composed stationary one: $k(\gamma(\mathbf{u}),\gamma(\mathbf{v}))=h(\gamma(\mathbf{u})^{\mathsf{T}}\gamma(\mathbf{v}))=h\circ\gamma(\mathbf{u}-\mathbf{v})$, making it better suited for low-dimensional regression.
Notably, regression with a stationary kernel corresponds to data reconstruction with a convolution filter. 

Combining lemmas \ref{lemma:ntk} and \ref{lemma:composed_ntk}, we see that higher frequency mapping results in a composed kernel with a wider spectrum, allowing faster convergence for high-frequency patterns.
By adding a wide range of Fourier features to the input mapping, the resulting composed kernel can have a wider spectrum, serving as an effective convolutional filter for signal reconstruction. In fact, the model predictions generated by Eq. \eqref{eq:composed_kernel} are sums of observed points, weighted by a function of the Euclidean distance, which is demonstrated to be effective for traffic data reconstruction problems, such as the adaptive smoothing method \citep{treiber2013traffic}.

Next, we show that the MLP layers with sine activation functions in Eq. \eqref{eq:siren} are also equivalent to Fourier mappings, thus showing similar effects as the CRF. The following lemma explains this equivalence.
\begin{lemma}[The equivalence between periodic activation and Fourier features]
Given the input coordinate $\mathbf{v}\in\mathbb{R}^{d_{\text{in}}}$, a two-layer MLP with sine activation function and weights $\mathbf{W}_{f}\in\mathbb{R}^{2Nd\times d_{\text{in}}},\mathbf{W}\in\mathbb{R}^{d\times 2Nd}$
is formed as:
\begin{equation}\label{eq:siren_simple}
\phi_s(\mathbf{v})=\mathbf{W}\sin(\omega\mathbf{W}_f\mathbf{v}+\mathbf{b}_f)+\mathbf{b},
\end{equation}
then it can be equivalent to a single-layer linear network with a series of Fourier mapped inputs:
\begin{equation}\label{eq:lemma:fourier}
    h_f(\mathbf{v})=\mathbf{W}\left[
        \cos(2\pi\mathbf{W}_{f_1}\mathbf{v}),
    \sin(2\pi\mathbf{W}_{f_1}\mathbf{v}),\dots,\cos(2\pi\mathbf{W}_{f_N}\mathbf{v}), \sin(2\pi\mathbf{W}_{f_N}\mathbf{v})\right]+\mathbf{b}.
\end{equation}
\end{lemma}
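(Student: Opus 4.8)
The plan is to reduce the asserted equivalence to the elementary angle-addition identity for the sine, applied one hidden unit at a time. First I would expand the hidden layer of \eqref{eq:siren_simple} row-wise: writing $\mathbf{w}_{f,i}^{\mathsf{T}}$ for the $i$-th row of $\mathbf{W}_f$ and $b_{f,i}$ for the $i$-th entry of $\mathbf{b}_f$, the $i$-th hidden unit equals $\sin(\omega\,\mathbf{w}_{f,i}^{\mathsf{T}}\mathbf{v}+b_{f,i})$. The crucial observation is that the bias acts purely as a \emph{phase shift}, so by $\sin(a+b)=\cos(b)\sin(a)+\sin(b)\cos(a)$ each unit decomposes as
\begin{equation}
\sin\!\bigl(\omega\,\mathbf{w}_{f,i}^{\mathsf{T}}\mathbf{v}+b_{f,i}\bigr)=\cos(b_{f,i})\,\sin\!\bigl(\omega\,\mathbf{w}_{f,i}^{\mathsf{T}}\mathbf{v}\bigr)+\sin(b_{f,i})\,\cos\!\bigl(\omega\,\mathbf{w}_{f,i}^{\mathsf{T}}\mathbf{v}\bigr).
\end{equation}
Hence every phase-shifted sinusoid is a fixed linear combination of a pure sine and a pure cosine at the \emph{same} frequency $\omega\,\mathbf{w}_{f,i}$, with coefficients depending only on $b_{f,i}$.

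Next I would assemble these per-neuron identities into matrix form, obtaining $\sin(\omega\mathbf{W}_f\mathbf{v}+\mathbf{b}_f)=\mathbf{P}\,\boldsymbol{\psi}(\mathbf{v})$, where $\boldsymbol{\psi}(\mathbf{v})$ stacks the bias-free features $\cos(\omega\mathbf{W}_f\mathbf{v})$ and $\sin(\omega\mathbf{W}_f\mathbf{v})$ and $\mathbf{P}$ is the constant phase-mixing matrix built from $\cos(\mathbf{b}_f)$ and $\sin(\mathbf{b}_f)$. Substituting into $\phi_s(\mathbf{v})=\mathbf{W}\sin(\omega\mathbf{W}_f\mathbf{v}+\mathbf{b}_f)+\mathbf{b}$ gives $\phi_s(\mathbf{v})=(\mathbf{W}\mathbf{P})\,\boldsymbol{\psi}(\mathbf{v})+\mathbf{b}$. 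I would then match frequencies by identifying the rows of $\omega\mathbf{W}_f$ with the blocks $2\pi\mathbf{W}_{f_1},\dots,2\pi\mathbf{W}_{f_N}$ — a trivial rescaling sending $\omega\mapsto 2\pi$ and regrouping the rows — so that $\boldsymbol{\psi}(\mathbf{v})$ becomes exactly the concatenated Fourier-feature vector of \eqref{eq:lemma:fourier}. Folding the constant factor $\mathbf{P}$ into the readout, i.e.\ replacing the output weight by $\mathbf{W}\mathbf{P}$, then yields $\phi_s(\mathbf{v})=h_f(\mathbf{v})$, while the additive bias $\mathbf{b}$ carries over untouched.

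The genuine care lies in the bookkeeping rather than in any analytic difficulty. I must reconcile the dimension counts: the $2Nd$ sine units have to be organized so that their frequencies populate the $N$ cosine/sine blocks $\mathbf{W}_{f_k}$ of width $d$ with the right multiplicity (two phases per distinct frequency), since a pair of same-frequency, different-phase sinusoids is precisely what spans the $\cos/\sin$ pair at that frequency. Accordingly I would phrase the result in the honest sense that the two parameterizations compute the \emph{same family of functions} after reparameterizing the output linear layer — absorbing the phase coefficients $\mathbf{P}$ into $\mathbf{W}$ and rescaling the input frequencies by $2\pi/\omega$ — rather than as a literal identity of the weight matrices; the reuse of the symbol $\mathbf{W}$ in \eqref{eq:lemma:fourier} is to be read up to this absorption. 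Beyond this indexing, no estimates are required: the whole argument is the trigonometric identity together with linearity of the readout.
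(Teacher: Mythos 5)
Your proof is correct, but it runs in a different logical direction from the paper's. The paper proves the lemma by \emph{construction}: it fixes the hidden-layer parameters so that the two networks coincide literally, choosing $\mathbf{W}_f=[\mathbf{W}_{f_1},\mathbf{W}_{f_1},\dots,\mathbf{W}_{f_N},\mathbf{W}_{f_N}]^{\mathsf{T}}$ (each frequency block duplicated) and the alternating phases $\mathbf{b}_f=[\pi/2,0,\dots,\pi/2,0]$, so that the special case $\sin(x+\pi/2)=\cos(x)$ turns each duplicated pair of sine units into the $\cos/\sin$ pair of \eqref{eq:lemma:fourier}, with equality holding exactly (same $\mathbf{W}$, same $\mathbf{b}$) once $\omega=2\pi$. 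You instead start from an \emph{arbitrary} sine MLP, apply the full angle-addition identity to decompose each phase-shifted unit into a fixed linear combination of a pure sine and a pure cosine at the same frequency, and then absorb the resulting constant phase-mixing matrix $\mathbf{P}$ into the readout, concluding equivalence up to the reparameterization $\mathbf{W}\mapsto\mathbf{W}\mathbf{P}$. Your route is more general --- it shows that \emph{every} two-layer sine network is a Fourier-feature linear model, not merely that one particular choice of weights realizes the Fourier-feature model --- at the cost of the honest caveat you state: the identity of \eqref{eq:lemma:fourier} holds only after the output weights are redefined and the frequencies regrouped with the right multiplicity (your $\boldsymbol{\psi}$ has $4Nd$ entries, which collapse to the $2Nd$ features of the lemma only when frequencies are paired). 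The paper's special-case construction avoids any reparameterization, which is why it can assert literal equality of the displayed formulas; your argument buys generality and makes explicit that the content of the lemma is an equivalence of function classes rather than of weight matrices. Both rest on the same elementary trigonometry plus linearity of the readout, so neither requires any analytic machinery.
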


\begin{proof}
We prove this lemma by construction. Without generality, we denote: $\mathbf{W}_f=[\mathbf{W}_{f_1},\mathbf{W}_{f_1},\dots,\mathbf{W}_{f_N},\mathbf{W}_{f_N}]^{\mathsf{T}}\in\mathbb{R}^{2Nd\times d_{\text{in}}}$ with $\mathbf{W}_{f_i}\in\mathbb{R}^{d\times d_{\text{in}}}$ and $\mathbf{b}_f=[\pi/2,0,\dots,\pi/2,0]\in\mathbb{R}^{2Nd}$. Then Eq. \eqref{eq:siren_simple} can be rewritten as:
\begin{equation}
    \mathbf{W}\sin(\omega\left[\begin{aligned}
        \mathbf{W}_{f_1}&\mathbf{v},\\
        \mathbf{W}_{f_1}&\mathbf{v},\\
        &\vdots \\
        \mathbf{W}_{f_N}&\mathbf{v}, \\
        \mathbf{W}_{f_N}&\mathbf{v}
    \end{aligned}\right]
    +\left[\begin{aligned}
    \pi&/2,\\ 
    &0,\\
    \vdots&\\
    \pi&/2,\\
    &0
    \end{aligned}\right]
    )+\mathbf{b}~=\mathbf{W}\left[\begin{aligned}
    \cos(\omega\mathbf{W}_{f_1}&\mathbf{v}), \\
    \sin(\omega\mathbf{W}_{f_1}&\mathbf{v}), \\
    \vdots\\
    \cos(\omega\mathbf{W}_{f_N}&\mathbf{v}), \\
    \sin(\omega\mathbf{W}_{f_N}&\mathbf{v}), \\
    \end{aligned}\right]+\mathbf{b}.
\end{equation}
When $\omega=2\pi$, the above equation equals \eqref{eq:lemma:fourier}.
\end{proof}

The two techniques show similar effects in both the input and hidden layers of ST-INR and enable it to encode high-frequency structures adaptively to learn complex data patterns, which is a significant superiority over standard low-rank models.

\subsubsection{Implicit low-rank regularization} \label{sec:implicit_low_rank}
In addition to high-frequency details, STTD is dominated by a few low-rank structures \citep{tan2013tensor,asif2016matrix}. Conventional low-rank models such as matrix factorization and nuclear norm minimization capture the low-rank structures with explicit low-rank constraints. As an alternative, we show that our model features an implicit low-rank regularization that is learned from the gradient descent over the target data. 

Recall that $d_x$ and $d_t$ in Eq. \eqref{eq:factorize} determines the dimension of the interaction between spatial and temporal factors. In canonical matrix factorization or tensor factorization methods, such a dimension is set to a very small number $r\ll\min\{N,T\}$ with $N,T$ denoting the dimension of the input matrix, to encourage a low-rank solution. It is widely acknowledged in the field that low-rankness significantly benefits the reconstruction of traffic data \citep{tan2013tensor}. However, the set $d_x=d_t=r$ can restrict the model expressivity of $\Phi_x(x)$ and $\Phi_t(t)$ that are parameterized by deep neural networks. A compromise needs to be reached between explicit low-rank regularization and high model capacity. Additionally, traffic data from different sources and scales can have distinct low-rank patterns. For example, the low-rank pattern identified in microscopic data such as vehicle trajectory can not generalize to macroscopic data such as mobility flow. The determination of different rank values for various STTD can pose a great challenge.

Fortunately, we can address this issue by the implicit low-rank regularization of deep models. Specifically, we treat our model as a special case of deep matrix factorization (DMF) \citep{arora2019implicit}. We can ensure both the low-rankness and model capacity by adopting a full-dimensional factor matrix, i.e., $d_x=d_t=\min\{N,T\}$ and imposing an implicit low-rank regularization from the gradient descent method.

To analyze the low-rankness of our model, on the one hand, we can impose some structural constraints on Eq. \eqref{eq:factorize}. We characterize $\Phi$ as a dynamical system that is conditioned on the evolution step $\tau$ of gradient descent. We assume $\Phi_x(\tau)\in\mathbb{R}^{N\times\min\{N,T\}},\Phi_t(\tau)\in\mathbb{R}^{T\times\min\{N,T\}}$ have orthonormal columns and $\mathbf{M}(\tau)\in\mathbb{R}^{\min\{N,T\}\times\min\{N,T\}}$ is diagonal. Then $\Phi(\tau)=\Phi_x(\tau)\operatorname{diag}(\sigma_1(\tau),\dots,\sigma_{\min\{N,T\}}(\tau))\Phi_t(\tau)^{\mathsf{T}}$ with $\sigma_j(\tau)$ denotes the signed singular values of $\Phi(\tau)$. On the other hand, if we omit the activation and bias terms, we have $\Phi(\mathbf{v})=\mathbf{W}_x^{L+1}\mathbf{W}_x^{{L}}\cdots\mathbf{W}^0_x\mathbf{v}\mathbf{M}\mathbf{W}_t^{0,\mathsf{T}}\cdots\mathbf{W}_t^{L+1,\mathsf{T}}$, which forms a special DMF model defined in \citep{arora2019implicit}.
Under these conditions, our factorized model invokes the following lemma.

\begin{lemma}[Implicit low-rank regularization of DMF \citep{arora2019implicit}]\label{lemma:low-rank}
Given the loss function $\mathcal{L}$ in Eq. \eqref{eq:single_inr}, the signed singular values $\sigma(\tau)$ of $\Phi_{\tau}$ evolve by the following rule:
\begin{equation}
    \dot{\sigma}_j(\tau)=-2L(\sigma_j^2(\tau))^{1-\frac{1}{2L}}\langle \nabla\mathcal{L},\Phi_x^j(\tau)\Phi_t^j(\tau)^{\mathsf{T}}\rangle,~j=1,\dots\min\{N,T\},
\end{equation}
where $\Phi_x^j(\tau),\Phi_t^{j}(\tau)$ are the $j$-th column vectors of $\Phi_x,\Phi_t$, and $L$ is the depth of $\mathcal{F}_{\theta}$.
\end{lemma}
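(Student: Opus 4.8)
The plan is to recognize this statement as a specialization of the gradient-flow analysis of deep matrix factorization in \citet{arora2019implicit}, applied to the end-to-end matrix $\Phi(\tau)$. Under the stated structural assumptions --- $\Phi_x(\tau)$ and $\Phi_t(\tau)$ carry orthonormal columns and $\mathbf{M}(\tau)$ is diagonal --- the representation $\Phi(\tau)=\Phi_x(\tau)\operatorname{diag}(\sigma_1(\tau),\dots,\sigma_{\min\{N,T\}}(\tau))\Phi_t(\tau)^{\mathsf{T}}$ is precisely an SVD of the product matrix, so the task reduces to tracking how the singular triplets of this SVD move under the flow induced by $\mathcal{L}$. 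First I would write the continuous-time gradient descent (gradient flow) $\dot{\mathbf{W}}=-\nabla_{\mathbf{W}}\mathcal{L}$ for each weight matrix in the depth-$2L$ product $\mathbf{W}_x^{L+1}\cdots\mathbf{W}_x^0\,\mathbf{M}\,\mathbf{W}_t^{0,\mathsf{T}}\cdots\mathbf{W}_t^{L+1,\mathsf{T}}$, and invoke the conservation (balancedness) law: along the flow the quantities $\mathbf{W}_{j+1}^{\mathsf{T}}\mathbf{W}_{j+1}-\mathbf{W}_j\mathbf{W}_j^{\mathsf{T}}$ are time-invariant. A balanced initialization then guarantees that the orthonormal-column and diagonal structure assumed in the statement is preserved, forces each factor to share the singular directions of $\Phi(\tau)$, and ties each individual factor's singular values to the end-to-end value $\sigma_j(\tau)$ through a common fractional power of order $1/(2L)$.

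Next I would differentiate the product to obtain the end-to-end dynamics $\dot{\Phi}(\tau)$ as a sum of $2L$ terms, each of the form (partial product)$\,\cdot\,\nabla\mathcal{L}\,\cdot\,$(partial product); under balancedness these collapse into a single expression driven by $\nabla\mathcal{L}$ and the shared singular structure. I would then differentiate the SVD itself: from $\Phi_x^{\mathsf{T}}\Phi_x=\mathbf{I}$ one deduces that $\Phi_x^{\mathsf{T}}\dot{\Phi}_x$ is antisymmetric, hence has zero diagonal, and likewise for $\Phi_t$. Projecting $\dot{\Phi}(\tau)$ onto the rank-one direction $\Phi_x^j(\tau)\Phi_t^j(\tau)^{\mathsf{T}}$ in the Frobenius inner product then annihilates the rotational contributions of the singular vectors (by orthonormality $\langle \Phi_x^k\Phi_t^{k\,\mathsf{T}},\Phi_x^j\Phi_t^{j\,\mathsf{T}}\rangle=\delta_{jk}$, and the surviving diagonal term vanishes by the antisymmetry above) and isolates $\dot{\sigma}_j(\tau)=\langle \dot{\Phi}(\tau),\Phi_x^j(\tau)\Phi_t^j(\tau)^{\mathsf{T}}\rangle$. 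Substituting the collapsed end-to-end dynamics and the balancedness power then yields the stated formula.

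The hard part will be establishing the exact depth-dependent prefactor $2L\,(\sigma_j^2(\tau))^{1-\frac{1}{2L}}$. This requires carefully combining the balancedness relation --- which equates each individual-factor singular value with a $\sigma_j^{1/(2L)}$-type power of the end-to-end singular value --- with the $2L$-fold sum arising from the product rule, so that the chain of partial products contributes the multiplicative factor $2L$ while the residual $(\sigma_j^2)^{-1/(2L)}$ correction accounts for the difference between moving a single factor and moving the full product. A secondary technicality is justifying differentiability of the SVD, which needs the singular values to remain simple along the trajectory; this holds for generic initialization, and I would either assume simple singular values or appeal to the continuity argument of \citet{arora2019implicit}. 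Since the statement is quoted from that reference, I would close by citing their corresponding theorem for the remaining algebraic identities rather than re-deriving each one.
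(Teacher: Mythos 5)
Your proposal is correct and follows essentially the same route as the paper: the paper's own ``proof'' is nothing more than a citation to Theorem 3 of \citet{arora2019implicit}, and your sketch is an accurate reconstruction of exactly that theorem's argument (gradient flow on the factors, the balancedness conservation law, differentiating the analytic SVD with the antisymmetry of $\Phi_x^{\mathsf{T}}\dot{\Phi}_x$ killing the rotational terms, and the $2L$-fold product-rule sum combining with the fractional balancedness powers to give the prefactor $2L(\sigma_j^2)^{1-\frac{1}{2L}}$). Since you also close by deferring the remaining algebraic identities to the same reference, there is no substantive difference between your approach and the paper's.
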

\begin{proof}
    The proof can be found in Theorem 3 of \citet{arora2019implicit}.
\end{proof}

Lemma \ref{lemma:low-rank} indicates that the evolution of singular values of $\Phi$ conditions on the depth $L$ of the factorized model. When $L\geq 1$, the nontrivial factor $-2L(\sigma_j^2(\tau))^{1-\frac{1}{2L}}$ intensifies the evolution of large singular values and attenuates that of smaller ones. Such an updating rule promotes a factorization that features a small number of large singular values and a large number of small ones. In addition, a larger $L$ encourages more significant gaps between large and small singular values.
This is an implicit bias (regularization) towards a low-rankness. With this, we can ensure the expressivity of MLPs by setting a large width and depth and obtain a low-rank resolution by the regularization of gradient descent. Moreover, we can directly adopt full-dimensional factorization to bypass the need to select a task-specific rank $r$.
Demonstrations of this property are given in Section \ref{exp:low-rank}.

\subsubsection{Inherent smoothness from MLPs}\label{Sec: smoothness}
Apart from the low-rankness, some other structural priors also have significant impacts on STTD.
For example, smoothness priors are essential to reconstruct traffic data under poor observation conditions \citep{chen2022laplacian,nie2023correlating}. As revealed in previous work, temporal smoothness (continuity) is helpful for estimating data from missing time intervals, and graph spectral or spatial smoothness can be exploited for extrapolation. However, most of these handcrafted priors are task-dependent and may not be generalized to other scenarios. Therefore, we resort to the inherent smoothness property of our model that is scenario-independent and fully learned from the data.
Specifically, \cite{luo2023low} has shown that, under some assumptions, the tensor function is Lipschitz continuous in Euclidean space. As a complement, we examine its continuity in arbitrary spectral domain, i.e., in a topological space defined in Section \ref{Sec: graph-embedding}.

\begin{lemma}[Spectral smoothness of ST-INR]\label{lemma:smoothness}
Consider a single ST-INR defined on an arbitrary topological space: $\Phi(\mathbf{e})=\mathbf{W}^L(\sigma(\mathbf{W}^{{(L-1)}}(\cdots\sigma(\mathbf{W}^1\mathbf{e}))))$, we assume that the $\ell_1$ norm of each INR weight matrix $\mathbf{W}^{(\ell)}$ is bounded by $\xi$, and the $\ell_1$ norm of middle transform $\mathcal{M}$ is bounded by $\eta$, then we have:
\begin{equation}\label{eq:lipschitz}
    \begin{aligned}
        |\Phi(\mathbf{e}_1,\mathbf{e}_2,\dots,\mathbf{e}_n)-\Phi(\mathbf{e}_1',\mathbf{e}_2,\dots,\mathbf{e}_n)|&\leq\eta\xi^{nL}\delta^{n-1}|\mathbf{e}_1-\mathbf{e}_1'|,\\
        |\Phi(\mathbf{e}_1,\mathbf{e}_2,\dots,\mathbf{e}_n)-\Phi(\mathbf{e}_1,\mathbf{e}_2',\dots,\mathbf{e}_n)|&\leq\eta\xi^{nL}\delta^{n-1}|\mathbf{e}_2-\mathbf{e}_2'|,\\
        &\vdots \\
        |\Phi(\mathbf{e}_1,\mathbf{e}_2,\dots,\mathbf{e}_n)-\Phi(\mathbf{e}_1,\mathbf{e}_2,\dots,\mathbf{e}_n')|&\leq\eta\xi^{nL}\delta^{n-1}|\mathbf{e}_n-\mathbf{e}_n'|,\\
    \end{aligned}
\end{equation}
which means that $\Phi(\mathbf{e})$ is Lipschitz continuous in an arbitrary spectral coordinate system $\mathbf{e}$.
\end{lemma}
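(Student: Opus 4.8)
The plan is to reduce this multi-coordinate Lipschitz statement to two ingredients: (i) a single-argument Lipschitz bound for each factor network $\Phi_{\theta_i}$, and (ii) the multilinearity of the Tucker contraction against the core tensor $\mathcal{M}$. The crucial observation is that an MLP is indifferent to whether its input is a Euclidean coordinate or a spectral embedding $\mathbf{e}$; hence the Euclidean Lipschitz argument of \cite{luo2023low} transfers essentially verbatim to the spectral domain, and only the bookkeeping through the tensor contraction against $\mathcal{M}$ is genuinely new. Here $\delta$ is understood as a uniform bound on the magnitude of the spectral coordinates $\mathbf{e}_i$ over the (bounded) embedding domain.

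First I would establish that each coordinate network is Lipschitz. Writing $\Phi_{\theta_i}(\mathbf{e}) = \mathbf{W}^{(L)} \sigma(\mathbf{W}^{(L-1)} \cdots \sigma(\mathbf{W}^{(1)} \mathbf{e}))$ and using that $\sigma$ is $1$-Lipschitz together with the assumed bound $\|\mathbf{W}^{(\ell)}\|_1 \le \xi$, each layer contracts differences by a factor at most $\xi$; since the bias terms cancel in differences, composing the $L$ layers yields $|\Phi_{\theta_i}(\mathbf{e}) - \Phi_{\theta_i}(\mathbf{e}')| \le \xi^L |\mathbf{e} - \mathbf{e}'|$. The same chain of estimates, applied to the bounded embedding domain, also gives a magnitude bound $\|\Phi_{\theta_i}(\mathbf{e}_i)\| \le \xi^L \delta$ on each factor output.

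Next I would exploit that $\Phi_\theta(\mathbf{v}) = \mathcal{M} \times_1 \Phi_{\theta_1}(\mathbf{e}_1) \times_2 \cdots \times_n \Phi_{\theta_n}(\mathbf{e}_n)$ is multilinear in the factor vectors. Fixing all coordinates but the first and subtracting, linearity in the first mode gives $\Phi(\mathbf{e}_1,\ldots) - \Phi(\mathbf{e}_1',\ldots) = \mathcal{M} \times_1 (\Phi_{\theta_1}(\mathbf{e}_1) - \Phi_{\theta_1}(\mathbf{e}_1')) \times_2 \Phi_{\theta_2}(\mathbf{e}_2) \cdots \times_n \Phi_{\theta_n}(\mathbf{e}_n)$. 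Applying a generalized H\"older inequality for the tensor contraction, $|\mathcal{M} \times_1 \mathbf{a}_1 \times \cdots \times_n \mathbf{a}_n| \le \|\mathcal{M}\|_1 \prod_i \|\mathbf{a}_i\|$, and inserting the two bounds from the previous step gives $|\Phi(\mathbf{e}_1,\ldots) - \Phi(\mathbf{e}_1',\ldots)| \le \eta \cdot \xi^L |\mathbf{e}_1 - \mathbf{e}_1'| \cdot (\xi^L \delta)^{n-1} = \eta \xi^{nL} \delta^{n-1} |\mathbf{e}_1 - \mathbf{e}_1'|$. By symmetry, repeating the argument with the perturbation placed in mode $i$ produces each of the $n$ inequalities of the statement, so $\Phi$ is Lipschitz continuous in every spectral coordinate.

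The step I expect to require the most care is the tensor-contraction inequality: I must fix mutually compatible norms so that the $\ell_1$ bound on $\mathcal{M}$ and the Euclidean factor-vector norms combine into a clean product bound, and verify that the contraction in the perturbed mode is genuinely linear so that the difference passes inside $\mathcal{M}$ without error. A secondary subtlety is the frequency scaling of the activation: the sine activation in Eq. \eqref{eq:siren} carries a factor $\omega_0$, so strictly each layer is $\omega_0$-Lipschitz, and I would either absorb $\omega_0$ into $\xi$ or prove the estimate for a generic $1$-Lipschitz $\sigma$ as written. The bias terms are harmless for the Lipschitz estimate since they cancel in differences, but they do enter the magnitude bound $\|\Phi_{\theta_i}(\mathbf{e}_i)\| \le \xi^L \delta$; I would therefore either assume unbiased factor networks or let $\delta$ absorb the bias contribution over the bounded domain.
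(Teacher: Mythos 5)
Your proposal is correct and follows essentially the same route as the paper's own proof: both pass the coordinate difference inside the Tucker contraction by multilinearity, bound the perturbed factor by the per-layer estimate $\xi^{L}|\mathbf{e}_i-\mathbf{e}_i'|$ (from $1$-Lipschitz activations and the $\ell_1$ weight bounds), bound the remaining factors by $\xi^{L}|\mathbf{e}_i|\leq\xi^{L}\delta$, and absorb the core tensor via $\eta$, yielding $\eta\xi^{nL}\delta^{n-1}$. If anything, your write-up is more careful than the paper's (which glosses over the norm-compatibility of the contraction bound, the $\omega_0$ scaling of the sine activation, and the bias terms), but the underlying argument is identical.
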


\begin{proof}
According to the factorization rule in Eq. \eqref{eq:factorize}, we have $\Phi(\mathbf{e}_1,\mathbf{e}_2,\dots,\mathbf{e}_n)=\mathcal{M}\times_1\Phi(\mathbf{e}_1)\times_2\Phi(\mathbf{e}_2)\cdots\times_n\Phi(\mathbf{e}_n)$, where $\times_n$ is the tensor product \citep{kolda2009tensor}. For $\forall \Phi(\mathbf{e})$, $|\Phi(\mathbf{e})|\leq |\mathbf{W}^L||\sigma(\mathbf{W}^{L-1}\cdots\sigma(\mathbf{W}^{1}))||\mathbf{e}|$, and both $\sin(\cdot)$ and $\texttt{ReLU}(\cdot)$ are Lipschitz continuous everywhere with Lipschitz constant equals to 1, then it holds:
\begin{equation}\label{eq:proof1}
    \begin{aligned}
        |\Phi(\mathbf{e}_1,\mathbf{e}_2,\dots,\mathbf{e}_n)-\Phi(\mathbf{e}_1',\mathbf{e}_2,\dots,\mathbf{e}_n)|&= |\mathcal{M}\times_1(\Phi(\mathbf{e}_1')-\Phi(\mathbf{e}_1'))\times_2\Phi(\mathbf{e}_2)\cdots\times_n\Phi(\mathbf{e}_n)|,\\
        &\leq \eta\xi^{(n-1)L}\Pi_{i\neq 1}|\mathbf{e}_i|(\Phi(\mathbf{e}_1')-\Phi(\mathbf{e}_1)),\\
        &\leq  \eta\xi^{nL}\Pi_{i\neq 1}|\mathbf{e}_i||\mathbf{e}_1'-\mathbf{e}_1|.\\
    \end{aligned}
\end{equation}
Eq. \eqref{eq:proof1} holds for $n$ coordinates. If we let $\delta=\max\{|\mathbf{e}_i|,i=1,\dots,n\}$, Eq. \eqref{eq:lipschitz} naturally holds.
\end{proof}

Lemma \ref{lemma:smoothness} indicates that our model is continuity in arbitrary spectral coordinate system. This property enables our model to impose smoothness priors on graphs and beyond the regular $x-t$ coordinates, which aligns with the characteristics of network-wide traffic data. We illustrate this feature in Section \ref{sec:exp_network}. With this property, we can implicitly regularize the smoothness of the solution through governing the continuity of MLPs, without having to elaborate on a complicated penalty function. In practice, this can be achieved by controlling the weight decay parameters of MLPs, refer to Section \ref{sec:hyperparameter}.

\subsubsection{Reduced computational complexity}\label{sec:complexity}
Due to the factorized design, our model features reduced computational complexity. Besides, it does not require resolution-dependent quadratic or cubic storage, such as the factor matrix and the core tensor, in low-rank models. This makes it applicable for learning large-scale traffic data instance.
We examine the complexity of each forward pass in two input settings: (1) continuous domain and (2) regular grids.

For case (1), we consider that $n$ observation points in a two-dimensional space are measured for training. The model inputs two separate coordinates $[n,1]\times[n,1]$ and maps each of them to $[n,d]$ where $d$ is the output size. Then two factors are merged to compute the final output $[n,1]$.
In this situation, the time complexity of a $L$ layer ST-INR with hidden size $D$ is $\mathcal{O}(nLD^2+nd^2+nDd)$. 

For case (2), the input is a mesh grid with size $n_1\times n_2$. Without factorization, the complexity becomes $\mathcal{O}(n_1n_2LD^2+n_1n_2d^2+n_1n_2Dd)$. When spatial-temporal distanglement is adopted, the complexity reduces to $\mathcal{O}((n_1+n_2)LD^2+(n_1+n_2)Dd)$, which is much more efficient for large-scale data with $n_1+n_2\ll n_1n_2$. 

As for space complexity, general MF models need to store the entire factor matrices $\mathbf{U}\in\mathbb{R}^{n_1\times d},\mathbf{V}\in\mathbb{R}^{n_2\times d}$ and in the running memory to update each iteration. Instead, our model only needs to store some weight matrix after model training, e.g., $\mathbf{W}_1\in\mathbb{R}^{1\times D},\mathbf{W}_L\in\mathbb{R}^{D\times d}$, which is agnostic to the input dimensions, which makes our model scalable for large input dimensions. 

We will show the computational superiority of our factorization strategy in Section \ref{exp:efficiency}.

\section{Demonstrations of Different Applications}\label{Sec: experiments}
This section demonstrates the practical effectiveness of the proposed ST-INR model. Several experiments were carefully designed on public STTD datasets, covering scales ranging from highway corridors, urban grids, to network levels. We first compare it with some representative low-rank models on benchmark tasks, then we perform supplementary studies to highlight the unique properties of our model.
All experiments are conducted on a computing platform with a single NVIDIA GeForce RTX A6000 GPU (48 GB). \textbf{PyTorch implementations of this project is publicly available at:} \url{https://github.com/tongnie/traffic_dynamics}.




\subsection{Corridor-level application: Highway traffic state estimation}
We first consider the traffic state estimation (TSE) problem in highway corridors. 
TSE is a prerequisite procedure for accurate reconstruction of individual trajectories \citep{chen2024macro}. Traditional methods include traffic flow models such as the Lighthill-Whitham-Richards model and smoothing-based models such as the adaptive smoothing method \citep{treiber2013traffic}. Actually, if we organize the spatial-temporal speed field into regular meshes, this can be treated as a structured matrix completion problem \citep{wang2023low}.

We adopt the trajectory data from NGSIM, US 101 dataset \footnote{\url{https://www.fhwa.dot.gov/publications/research/operations/07030/index.cfm}.} for experiment.
{This is an open-source dataset from the Next Generation simulation program on southbound US highway 101 of lane 2. Traffic trajectories were collected using digital video cameras. 
To create the TSE problem, we select an area in the second lane with a length of 1500 ft and a time span of 2600 s, resulting in a total of approximately 1300 vehicles.
Each trajectory point records its traffic speed value.
} Two input scenarios are considered: (a) estimation from discrete mesh grids; (b) estimation from continuous trajectories. 
In the first setting, we need to split the continuous spatial-temporal domain into regular mesh grids, to ensure the function of discrete baselines. Therefore, we use a time resolution of 5 s and a space resolution of 7 ft, resulting in a matrix of dimension $(215,520)$. We consider the following baselines: (1) Matrix factorization with alternating least squares (MF); (2) Low-rank matrix completion based on nuclear norm minimization (LRMC); (3) Laplacian convolutional representation-based matrix completion (LCR) \citep{chen2022laplacian}; (4) Circulant nuclear norm minimization model (CircNNM) \citep{liu2022recovery}; (5) STHTC: spatiotemporal Hankelized tensor completion \citep{wang2023low}, which is a state-of-the-art TSE model; (6) ASM: Adaptive smoothing method \citep{treiber2013traffic}; (7) MLP: vanilla MLP model.
We randomly sampled $15\%$ of the trajectories as observed probe vehicles, and all models need to estimate the full speed field.
The input to ST-INR is the discrete index of each observed cell, and the observed trajectory points are used for training. The trained ST-INR are adopted to predict the traffic states at all meshes.

\begin{table}[!htb]
  \centering
  \caption{Results (in terms of WMAPE, RMSE, and MAE) of highway traffic state estimation on mesh grid.}
    \begin{tabular}{c|>{\columncolor{blue!20!white}}cccccccc}
    \toprule
    \multicolumn{1}{c}{Metric}  & \multicolumn{1}{c}{\texttt{ST-INR}}& \multicolumn{1}{c}{\texttt{MF}}  & \multicolumn{1}{c}{\texttt{LRMC}} & \multicolumn{1}{c}{\texttt{LCR}} & \multicolumn{1}{c}{\texttt{CircNNM}} & \multicolumn{1}{c}{\texttt{STHTC}} & \multicolumn{1}{c}{\texttt{ASM}} & \multicolumn{1}{c}{\texttt{MLP}}\\
    \midrule
    WMAPE & \textbf{11.05\%} &21.70\% & 20.60\% &12.21\% & 12.72\%& 12.06\%& 13.57\%& 29.29\%\\
    RMSE (ft/s)  & \textbf{4.88} & 9.96& 9.62& 5.48&5.74 & 5.46&5.94 &12.51\\
    MAE (ft/s)  & \textbf{3.71} & 7.29& 6.92& 4.10 & 4.27& 4.05& 4.56&9.84\\
    \bottomrule
    \multicolumn{5}{l}{\scriptsize{Best performances are bold marked.}}
    \end{tabular}%
  \label{Tab: ngsim_results}%
\end{table}%

Results on discrete grids are given in Tab. \ref{Tab: ngsim_results} and Fig. \ref{fig:ngsim_result}. As can be seen, our model outperforms competing models by a large margin in this benchmark task. As indicated in Fig. \ref{fig:ngsim_result} (especially the black box in the lower right corner), our model can reconstruct fine details with high resolution, which is more consistent with the real congestion shock wave.
Interestingly, the vanilla MLP fails to learn any congestion wave and approximates the average speed distribution. This result clearly demonstrates that the high-quality reconstruction by ST-INR can be ascribed to the utilization of high-frequency features.

In the second scenario, we evaluate our model directly on continuous trajectory points, in which all existing low-rank baselines are incapable of working.
We randomly sampled $10\%$ of the $(x,t)$ coordinates as the collocation points to fit the continuous mapping, leading to approximately $5$ thousands of points.
The input to ST-INR is a set of $(x,t)$ coordinates. We compare our model with a MLP model equipped with sine activation function \citep{SIREN}. {We first trained the models on sparse trajectory points and enable them to predict the speed values of all trajectory points for evaluation. We record the training and generalization loss in Tab. \ref{Tab: ngsim_result_continuous}. We find that the pure MLP model can fit the observed points with acceptable accuracy, but it has difficulty generalizing to unseen data points.}

It is noteworthy that after being trained on the sampled collocation points, INRs can make inference at any resolution and position. Therefore, we upsample the trajectory points with a higher resolution. We uniformly sample $1600$ points along the time axis and $800$ points along the location axis, resulting in a $245\times$ upsampling rate. As shown in Fig. \ref{fig:ngsim_continuous_result}, our model can generate a consistent speed contour and successfully reconstruct the traffic flow phenomena, e.g., the shockwave propagation and the free-flow speed.

\begin{figure}[!htb]
\centering
\includegraphics[scale=0.32]{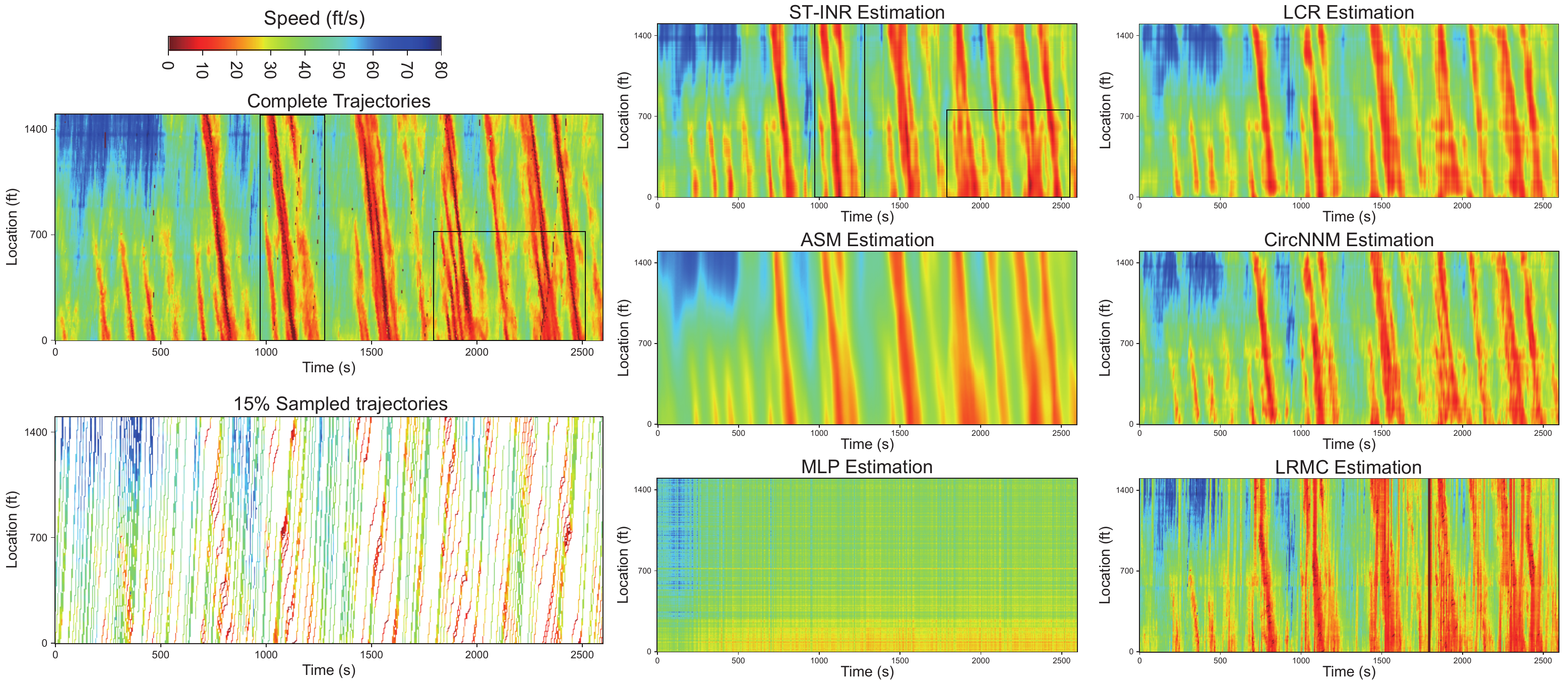}
\caption{TSE performances on discrete grid.}
\label{fig:ngsim_result}
\end{figure}

\begin{table}[!htb]
  \centering
  \caption{{Results of training and generalization performances on continuous trajectory points.}}
    \begin{tabular}{c|cc|cc}
    \toprule
    \multirow{2}{*}{Metric} & \multicolumn{2}{c|}{{Training loss}} & \multicolumn{2}{c}{{Generalization loss}} \\
    \cmidrule{2-5}
    & {\texttt{ST-INR}}& {\texttt{MLP}}  & {\texttt{ST-INR}} & \multicolumn{1}{c}{\texttt{MLP}} \\
    \midrule
    WMAPE & \textbf{3.98\%} & 5.23\% & \textbf{6.28\%} & 17.90\% \\
    RMSE (ft/s)  & \textbf{1.73} & 2.15 & \textbf{2.85} & 7.71\\
    MAE (ft/s)  & \textbf{1.29} & 1.69& \textbf{2.03} & 5.79\\
    \bottomrule
    \multicolumn{5}{l}{\scriptsize{Best performances are bold marked.}}
    \end{tabular}%
  \label{Tab: ngsim_result_continuous}%
\end{table}%

\begin{figure}[!htb]
\centering
\subfigure[Continuous trajectory points]{
\includegraphics[scale=0.32]{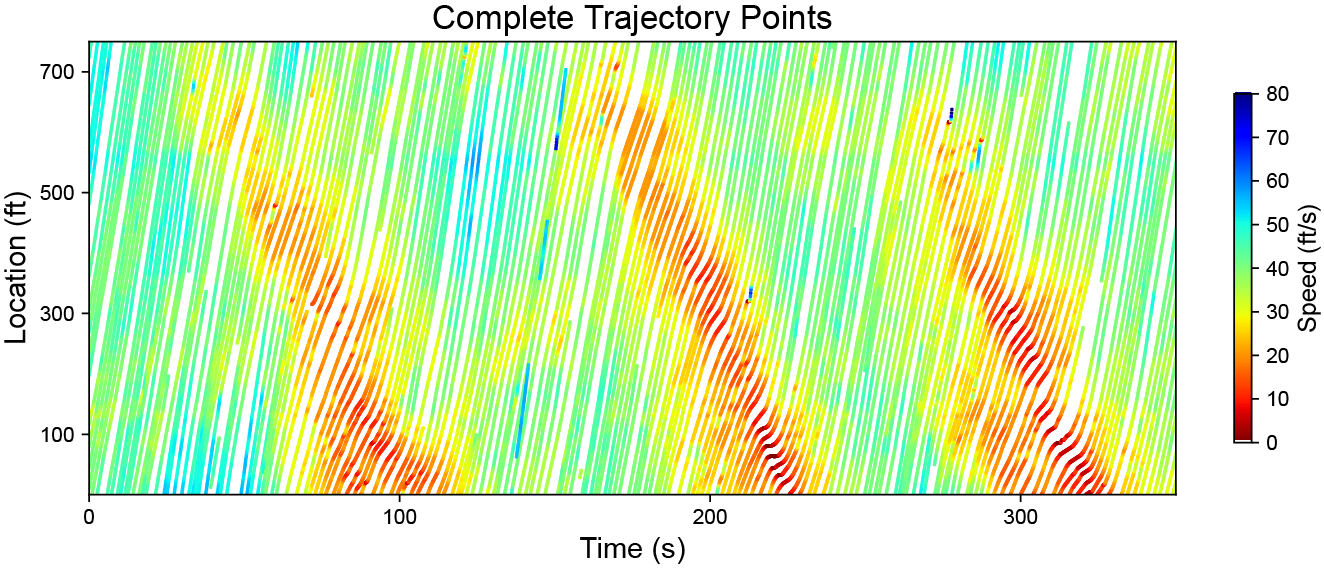}}
\centering
\subfigure[Sparsely sampled trajectory points]{
\includegraphics[scale=0.32]{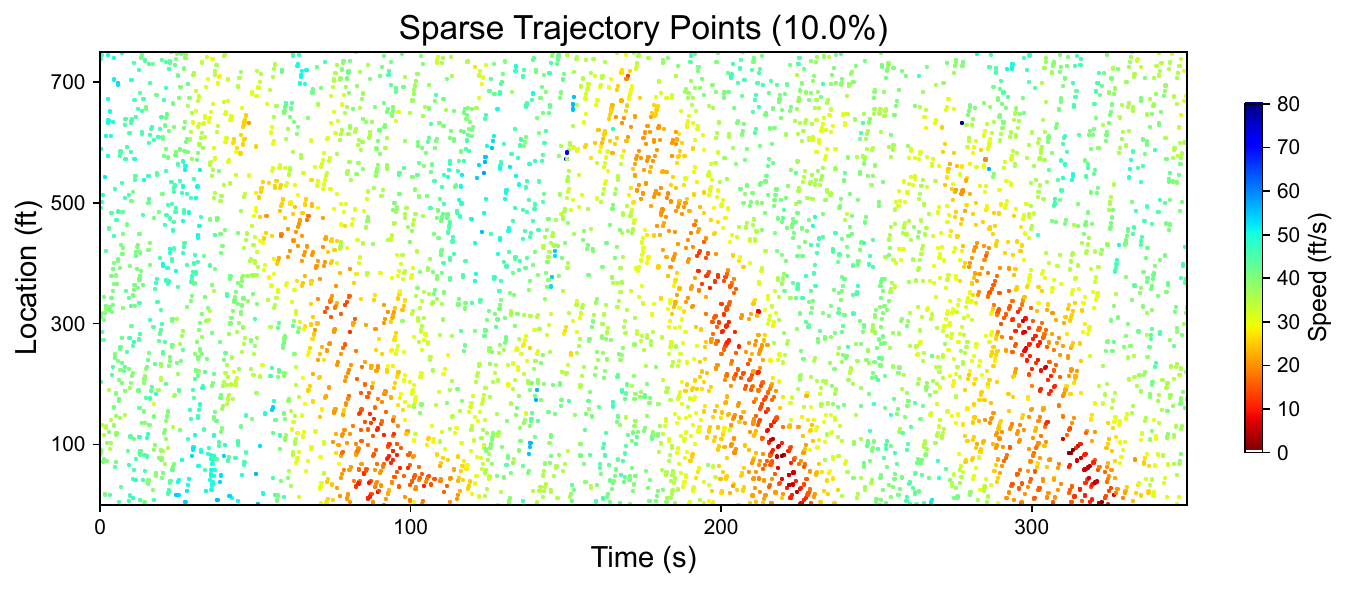}}
\centering
\subfigure[Upsampled trajectory points by ST-INR]{
\includegraphics[scale=0.35]{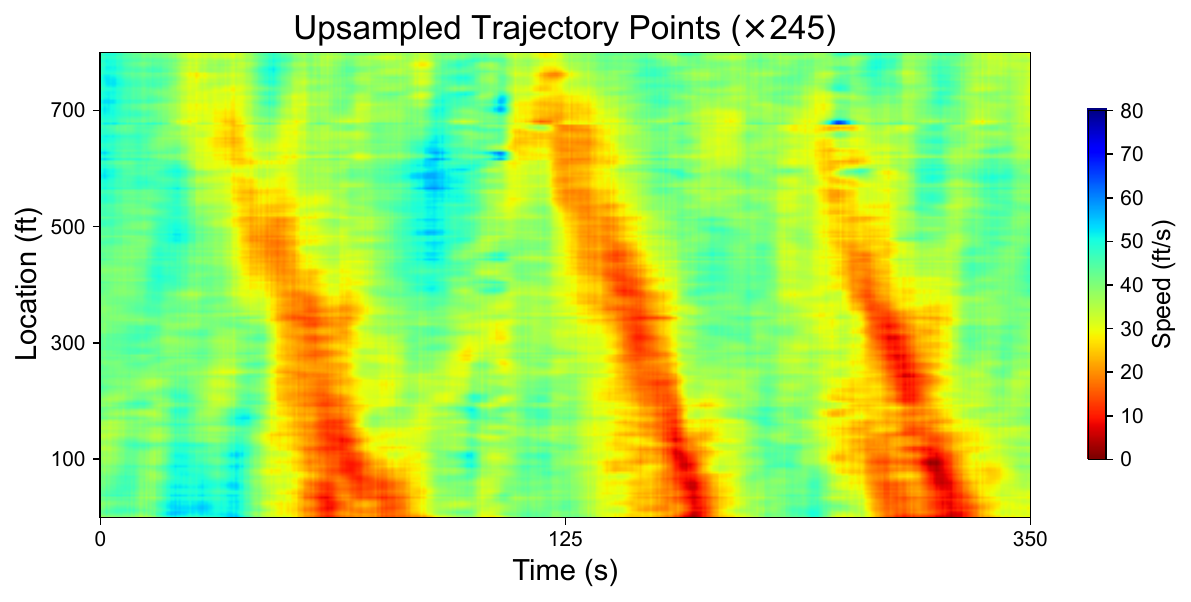}}
\centering
\subfigure[Upsampled trajectory points by MLP (Sine)]{
\includegraphics[scale=0.35]{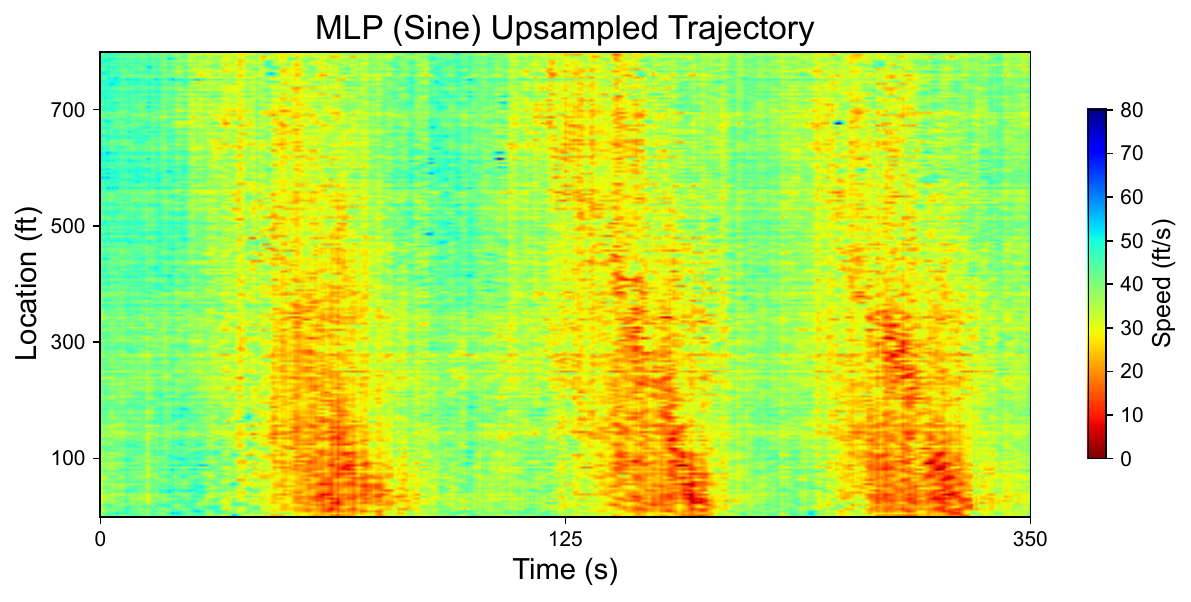}}
\caption{TSE performances on continuous space.}
\label{fig:ngsim_continuous_result}
\end{figure}




\subsection{Grid-level application: Urban mesh-based flow estimation}
The second task evaluates the estimation of the urban mesh-based flow map. This task can have wide applications in measuring and regulating mobility flow in urban areas \citep{liang2019urbanfm}. The grid flow map can be arranged with two spatial dimensions to denote the locations of the gird or origin-destination locations, and with a temporal dimension to denote the time. This structure can be  processed by a third-order tensor.

Two public datasets are adopted: (1) TaxiBJ \footnote{\url{https://github.com/yoshall/UrbanFM}.}: {grid-based taxi volume records in the main district of Beijing, where each grid reports the flow information at a $30$-min interval from March 1 to June 30, 2015. 
This data consists of trajectory data from taxicab GPS data and meteorology data in Beijing from four time intervals, which is published by \citep{zhang2017deep}.}
We keep the first 7 days and use the spatial resolutions of $32\times32$ and $64\times64$, resulting in tensors with sizes $(336,32,32)$ and $(336,128,128)$; (2) TaxiNYC \footnote{\url{https://www.nyc.gov/site/tlc/about/tlc-trip-record-data.page}.}: taxi trip records in NYC. {This datasets are from the New York City Taxi and Limousine Commission (TLC), capturing trip records for yellow taxis. The datasets encompass various fields such as pick-up and drop-off dates/times, locations, trip distances, fares, rate types, payment methods.}
Similarly to previous work, we use 69 zones in Manhattan district as pick-up and drop-off zones. We aggregate the daily taxi trip into origin-destination matrix and concatenate the mobility data from January to May in 2020 to form a tensor of size $(69,69,151)$. Due to the impact of COVID-19, there is a drastic drop in travel demands after March 2020 (see the first row in Fig. \ref{fig: nyc_result}).
For TaxiBJ data, we train separate models to estimate volumes at different resolutions.
We use $20\%$ of the data as observed flow and train all models to predict the remaining volumes. For TaxiNYC data, due to the sparsity of origin-destination flows, we randomly sample $40\%$ of the mobility data as training data.
For baselines, we consider: (1) CP: Canonical polyadic tensor decomposition; (2) Tucker: Tucker tensor decomposition; (3) LATC: Low-rank autoregressive tensor completion \citep{chen2021Autoregressive}, which is a state-of-the-art model for tensor-structured time series data; (4) LRTC-TNN: low-rank tensor completion based on truncated tensor nuclear norm \citep{chen2020nonconvex}; (5) Vanilla MLP model.

Qualitative and quantitative results are given in Tab. \ref{Tab: BJ_NYC_OD_results} and Figs. \ref{fig:taxibj_result} and \ref{fig: nyc_result}. First, Tab. \ref{Tab: BJ_NYC_OD_results} shows that our mode significantly surpasses other baselines, even without explicit temporal modules such as the autoregression. Note that ST-INR depends on the absolute locations of the data points, which is beneficial for the estimation of location-dependent flow patterns. Second, by observing Fig. \ref{fig:taxibj_result}, we find that ST-INR can capture the main shape of the Beijing road network and perform well in both resolutions. While tensor-based models can generate blur estimations without distinguishing the background area. MLPs cannot complete this task because of the inability to learn high-frequency patterns. Finally, Fig. \ref{fig: nyc_result} compares the results of ST-INR and the tensor model on NYC data. Clearly, our model readily learns complex mobility patterns, but the Tucker model can have difficulty learning accurate flows in many zones. In addition, due to the sparsity of mobility flows, tensor-based models can overfit zero values and generate incorrect flows for zones with small values.

\begin{table}[!htb]
  \centering
  \caption{Results of urban grid flow map estimation.}
    \begin{tabular}{c|c|>{\columncolor{blue!20!white}}cccccc}
    \toprule
    \multicolumn{2}{c}{Metric}  & \multicolumn{1}{c}{\texttt{ST-INR}} & \multicolumn{1}{c}{\texttt{CP}} & \multicolumn{1}{c}{\texttt{Tucker}} & \multicolumn{1}{c}{\texttt{LATC}} & \multicolumn{1}{c}{\texttt{LRTC-TNN}} & \multicolumn{1}{c}{\texttt{MLP}} \\
    \midrule
    \multirow{2}[1]{*}{TaxiBJ $32\times 32$} 
          & RMSE (veh/30min) & \textbf{62.11} & 95.86 & 96.35 & 77.17 & 73.27 & 237.86  \\
          & WMAPE & \textbf{14.93\%} & 24.43\% & 24.23\%& 17.40\% & 16.79\% & 55.83\%  \\
    \midrule
    \multirow{2}[1]{*}{TaxiBJ $64\times 64$} 
          & RMSE (veh/30min) & \textbf{21.64} & 33.06 & 32.45 & 27.35 & 26.62 & 98.77  \\
          & WMAPE & \textbf{17.44\%} & 29.15\% & 28.80\% & 22.51\% & 22.34\% & 74.71\% \\
        \midrule
    \multirow{2}[1]{*}{TaxiNYC} 
          & RMSE (veh/d) & \textbf{8.71} & 15.34 & 15.23 & 11.25 & 12.10& 62.86  \\
          & WMAPE & \textbf{14.53\%} & 22.28\% & 21.24\% & 16.87\% & 17.28\% & 70.22\% \\
    \bottomrule
    \multicolumn{5}{l}{\scriptsize{Best performances are bold marked.}}
    \end{tabular}%
  \label{Tab: BJ_NYC_OD_results}
\end{table}%

\begin{figure}[!htb]
\centering
\includegraphics[scale=0.32]{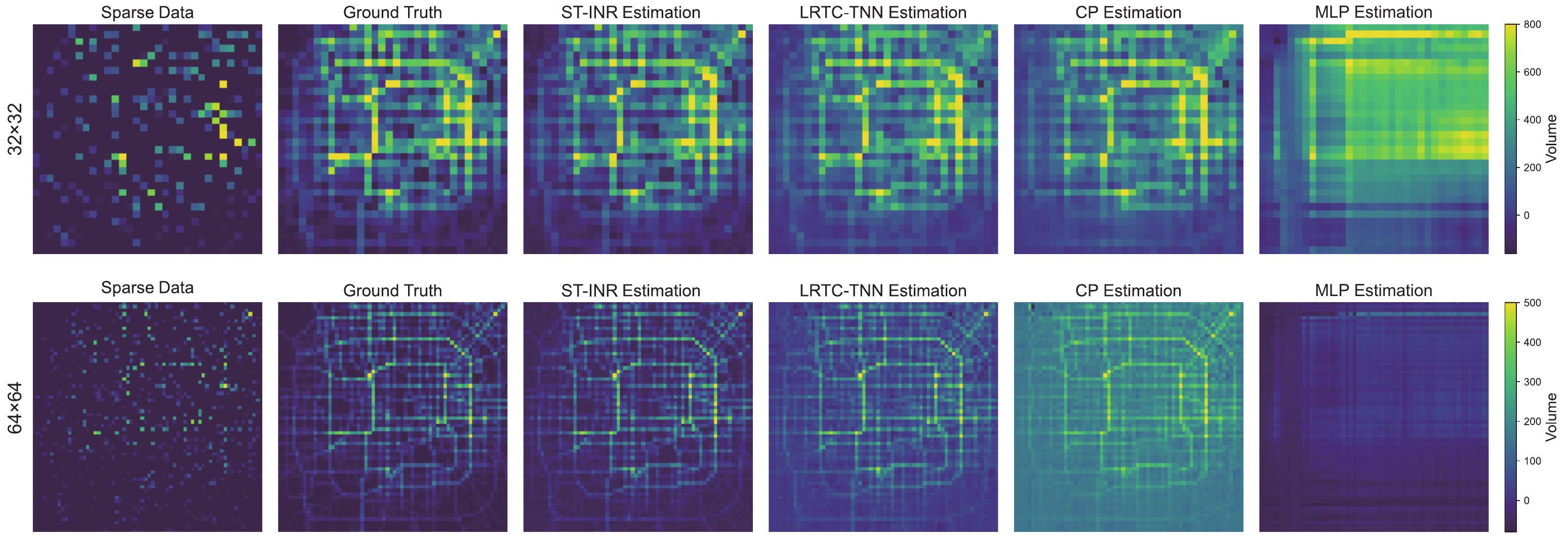}
\caption{Model performances on TaxiBJ data (the 10-th snapshot).}
\label{fig:taxibj_result}
\end{figure}

\begin{figure}[!htb]
\centering
\includegraphics[scale=0.5]{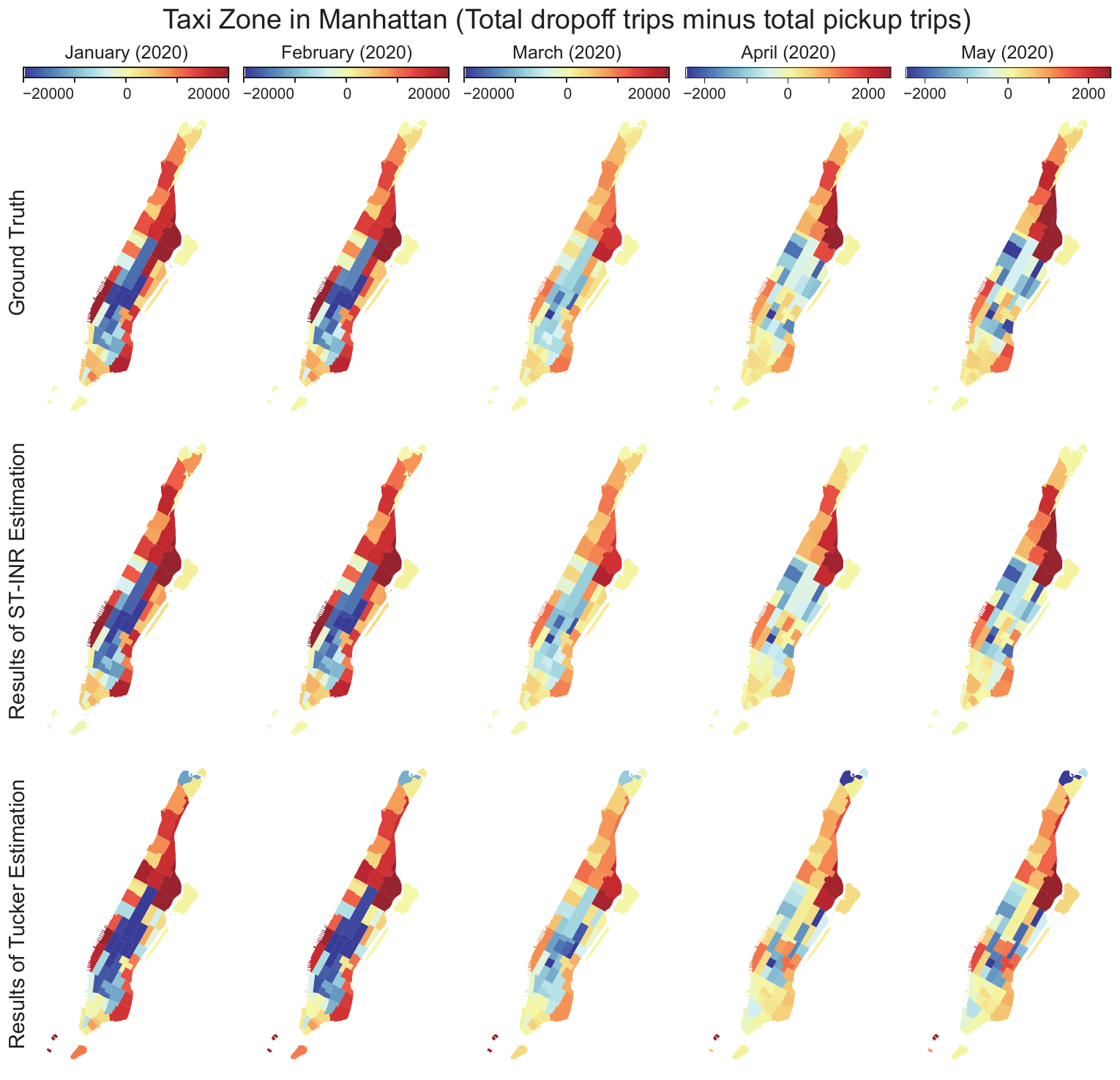}
\caption{Model performances on TaxiNYC data (Jan.-May., 2020).\protect\footnotemark}
\label{fig: nyc_result}
\end{figure}

\footnotetext{The visualization of this figure is based on the code from: \url{https://github.com/xinychen/vars}.}

\subsection{Network-level application: Highway and urban network state estimation}\label{sec:exp_network}
Next, we focus on estimating the states of traffic on the entire road network. The real-time states of the road network are vital for network-wide control and analysis \citep{saeedmanesh2021extended,paipuri2021empirical}. Unlike the above data structures, network data can sometimes be organized as a graph. Therefore, it is challenging for models that are defined on Euclidean space to model non-Euclidean data. In this case, we use the method in Section \ref{Sec: graph-embedding} to project the input coordinates to the Laplacian space. We use two variations of ST-INR with respect to the definition space, including: ST-INR (L): ST-INR defined in Laplacian eigenvector space, and ST-INR(E): ST-INR defined in Euclidean space. We train both of them to predict the adopted data.

For highway sensor network, we adopt: (1) PeMS-BAY: highway loop speed data from $325$ static detectors in the San Francisco South Bay Area collected by the Caltrans Performance Measurement System (PeMS) in 5 minutes \footnote{\url{https://pems.dot.ca.gov/}.}, where the first 30 days of data are used in our experiment; and (2) Seattle: speed data collected by 323 loop detectors deployed on freeways in Seattle area \footnote{\url{https://github.com/zhiyongc/Seattle-Loop-Data}.}. We also use a month of data in a 5-min interval.
For urban road network, we consider the Chicago and Berlin network from the NeurIPS2021-traffic4cast Competition \footnote{\url{https://developer.here.com/sample-data}.}, which transforms link-based traffic speed values to gray values and organizes the network states at each time into an ``image'' with size $(495,436)$. We adopt 288 frames to consider the within-day variations.

We create a ``missing sensor'' estimation problem for the two highway datasets. This task aims to estimate the speed at locations without loop sensors, which is viewed as a spatial kriging problem in the literature \citep{wu2021inductive,nie2023correlating}. We randomly remove a proportion of sensors ($60\%$ for PEMS-BAY and $80\%$ for Seattle) as unmeasured and train models to predict the unmeasured speed values using the adjacency between the sensors. To compare with methods capable of performing this task, we adopt baselines that are regularized either by graphs or smoothness priors: (1) SMF: smooth matrix factorization with total variation penalty \citep{he2015total}; (2) GRALS: graph-regularized alternating least squares \citep{rao2015collaborative}; (3) LCR; (4) GLTL: greedy low-rank tensor learning with Laplacian regularization \citep{bahadori2014fast}; (5) FP: a state-of-the-art graph diffusion-based model to estimate missing graph labels, called feature propagation \citep{rossi2022unreasonable}. The results of model comparison are shown in Tab. \ref{Tab: kriging_results} and Fig. \ref{Fig:highway_result}.

\begin{table}[!htb]
  \centering
  \caption{Results of highway network-wide traffic state estimation.}
    \begin{tabular}{c|c|>{\columncolor{blue!20!white}}c>{\columncolor{blue!20!white}}c cccccc}
    \toprule
    \multicolumn{2}{c}{Metric}  & \multicolumn{1}{c}{\texttt{ST-INR (L)}} & \multicolumn{1}{c}{\texttt{ST-INR(E)}} & \multicolumn{1}{c}{\texttt{SMF}} & \multicolumn{1}{c}{\texttt{GRALS}} & \multicolumn{1}{c}{\texttt{LCR}} & \multicolumn{1}{c}{\texttt{GLTL}} & \multicolumn{1}{c}{\texttt{FP}}  \\
    \midrule
    \multirow{3}[1]{*}{PEMS-BAY} 
          & RMSE (km/h)  & \textbf{6.99} & 9.49 & 8.74 & 9.86 & 8.44 & 11.45 &10.28  \\
          & WMAPE & \textbf{6.94\%} & 9.00\% & 9.17\%& 7.76\% & 8.92\% & 10.88\%  & 7.61\%   \\
          & MAE (km/h) & \textbf{4.34} & 5.63 & 5.74& 4.86 & 5.59 & 6.81 &4.77  \\
    \midrule
    \multirow{3}[1]{*}{Seattle} 
          & RMSE (km/h) & 8.51 & \textbf{7.14} & 9.36 & 7.57 & 8.02 & 8.82& 7.64 \\
          & WMAPE & 9.47\% &  \textbf{7.81\%} & 11.41\% & 8.43\% & 9.67\% & 11.26\% & 8.23\% \\
          & MAE (km/h)&  5.46&  \textbf{4.49} & 6.58 & 4.86 & 5.57 &  6.49 & 4.73 \\
    \bottomrule
    \multicolumn{5}{l}{\scriptsize{Best performances are bold marked.}}
    \end{tabular}%
  \label{Tab: kriging_results}%
\end{table}%

\begin{figure}[!htb]
\centering
\subfigure[Seattle data]{
\includegraphics[scale=0.3]{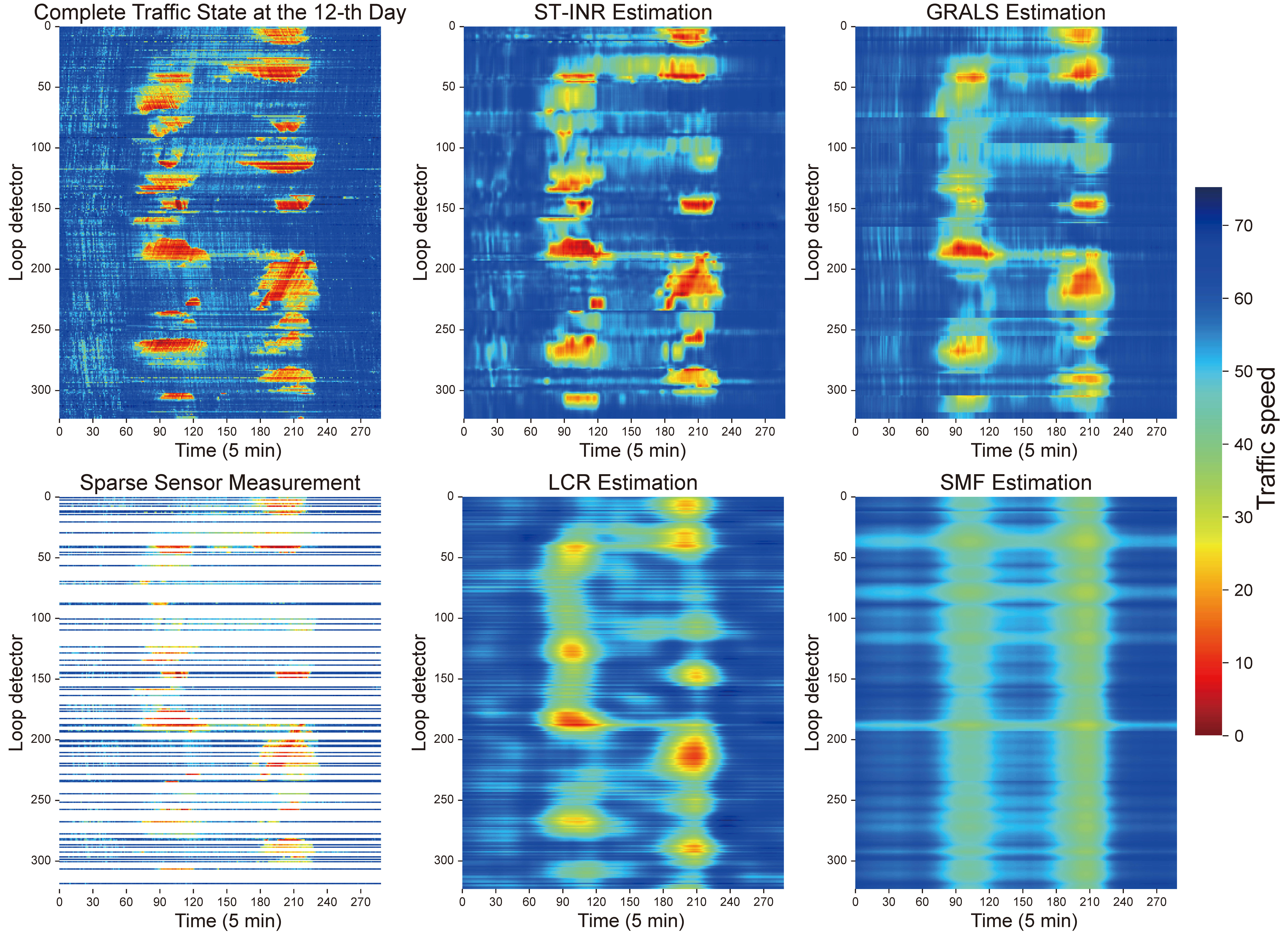}
}
\centering
\subfigure[PeMS data]{
\includegraphics[scale=0.35]{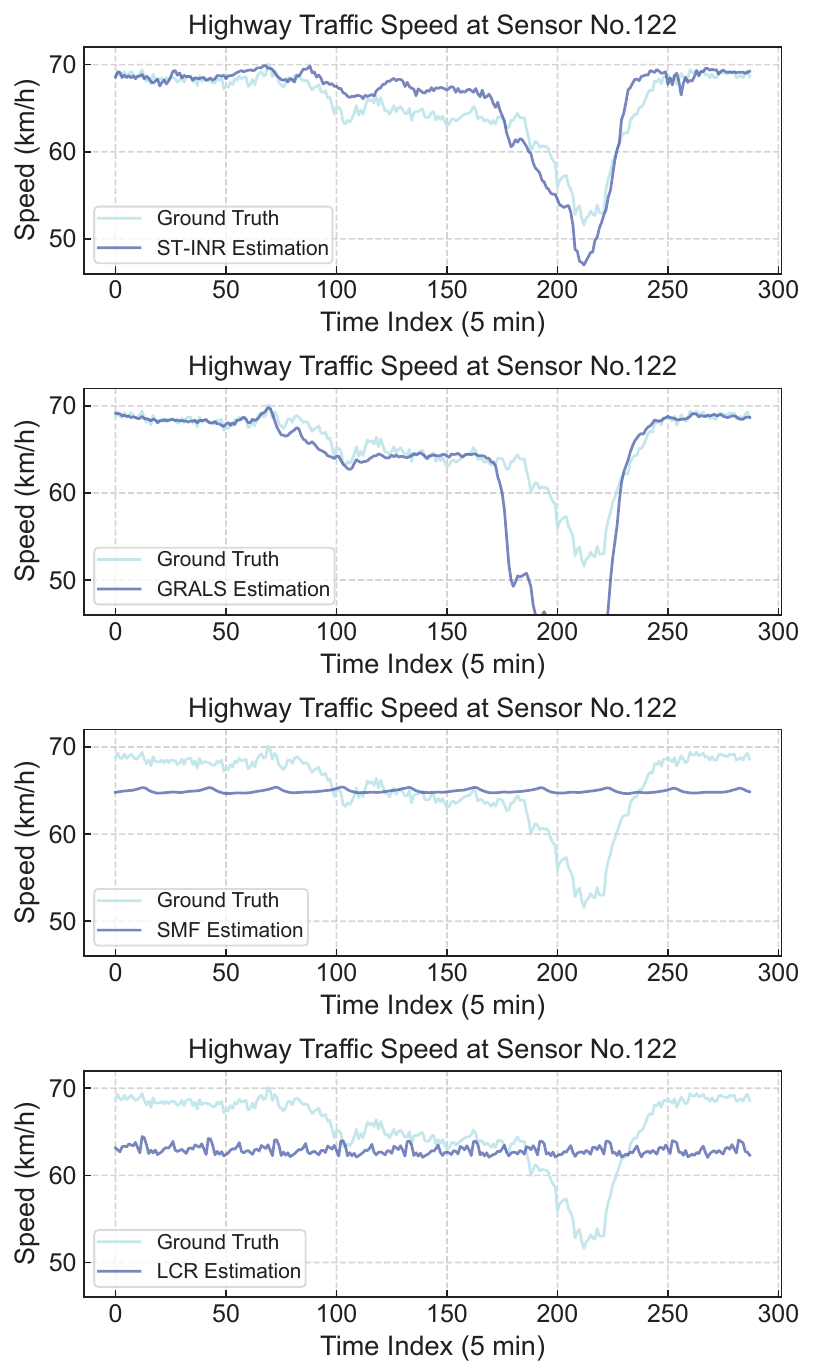}
}
\caption{Model performances on Seattle and PEMS-BAY speed data.}
\label{Fig:highway_result}
\end{figure}

It is demonstrated that ST-INR can be generalized to different coordinate systems (input spaces). For PEMS-BAY data with more complex graph structures, the Laplacian embedding plays a more important role in refining the input coordinates. In addition, because of the inherent smoothness property derived from the MLP, ST-INR can generate smooth estimations on both regular lattice and spectral embedding, even without any explicit smoothness or graph regularization terms. This finding echoes our analysis in Section \ref{Sec: smoothness}.
Combined with the excellent capability of encoding high-frequency patterns, ST-INR can produce more realistic reconstruction than other baselines. Conversely, smoothness regularization on Euclidean space does not achieve good performance on non-Euclidean space, such as the results of LCR and SMF in Fig. \ref{Fig:highway_result} (b).

\begin{figure}[!htb]
\centering
\includegraphics[scale=0.3]{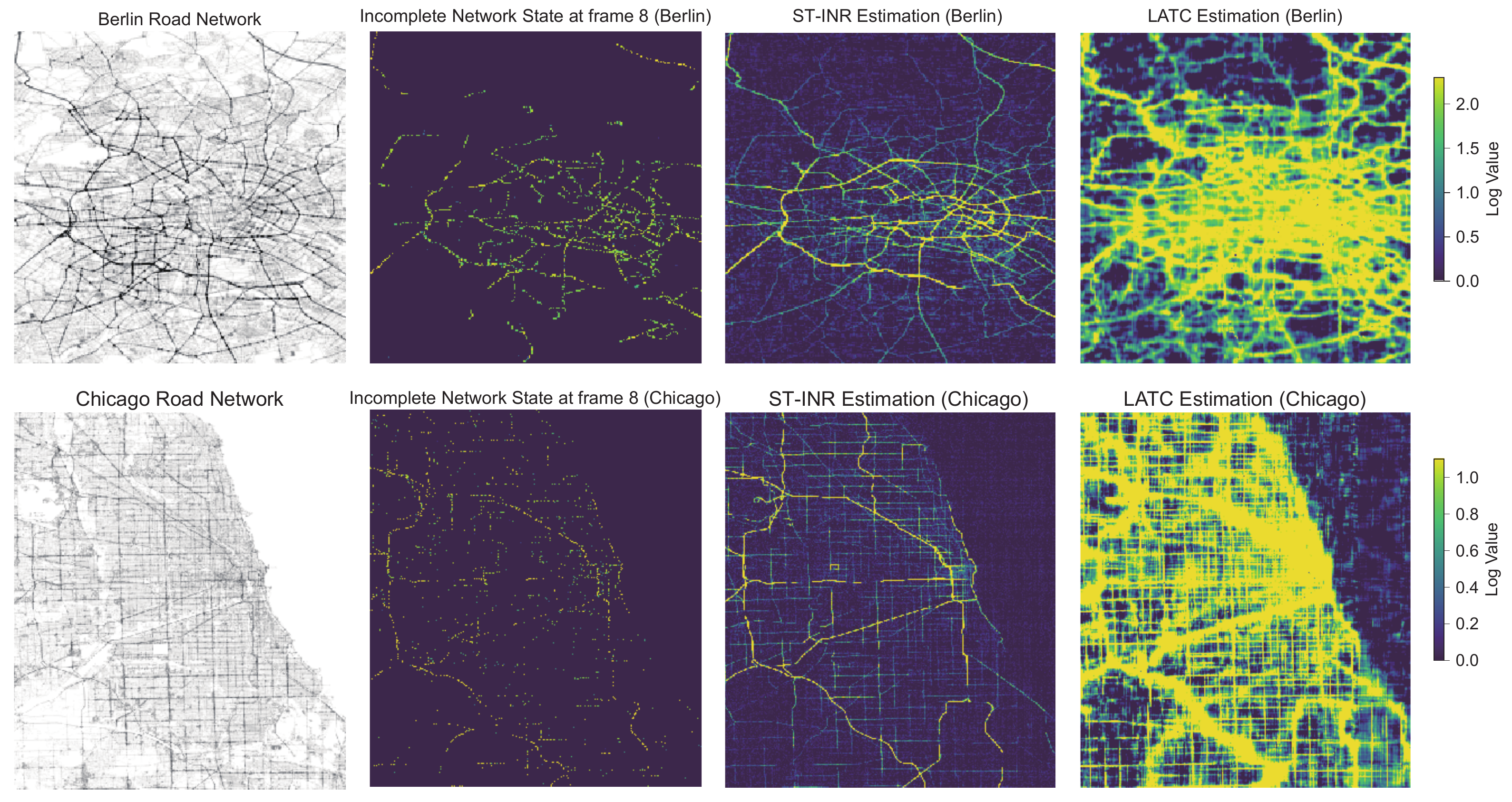}
\caption{Model performances on Chicago and Berlin network.}
\label{fig:network_result_urban}
\end{figure}

The case of urban network is more challenging due to the complexity of urban road network. As a result, we randomly select $50\%$ of the raw data as observations. 
This tasks requires the model to learn high-frequency patterns to reconstruct the complexity of urban road network. To evaluate, we train ST-INR are each of the Berlin and Chicago networks and compare it with state-of-the-art tensor completion model LATC.
Fig. \ref{fig:network_result_urban} reveals some surprising findings. First, ST-INR can successfully recover the complex network topology, as well as the traffic states, even without the guidance of the network topology. 
Second, pure low-rank models are incapable of learning these data with complicated topological structures, which again verifies the significance of high-frequency components.
Our model encodes a wide range of high-frequency series in the input layer, thereby being adaptive to different frequency patterns in real-world traffic data.

\section{Algorithmic analysis}\label{Sec: Algorithmic analysis}
On top of the analysis in Section \ref{Sec: theoretical_analysis} and the experimental results in Section \ref{Sec: experiments}, this section provides more discussions on the properties of our model to examine how these properties contribute to the learning of STTD.

{\subsection{Architectural ablation study}
We first assess the impact of each proposed modeling scheme by performing ablation studies. Three aspects are evaluated: (1) The effectiveness of the factorization scheme in Eq. \eqref{eq:factorize}; (2) The impact of low-rank constraints on the middle transform matrix $\mathbf{M}_{xt}$; (3) The significance of graph spetral embedding in Eq. \eqref{eq:graph_embedding}. Results in Tab. \ref{ablation} demonstrate the effectiveness of each modular design.

\begin{table}[!htb]
  \centering
  \caption{{Ablation studies on the model architecture.}}
    \begin{tabular}{l|ccccc}
    \toprule
          Dataset & Variant & \multicolumn{1}{l}{w/o factorization} & \multicolumn{1}{l}{w low-rank $\mathbf{M}_{xt}$} & \multicolumn{1}{l}{w/o Laplace}   & \multicolumn{1}{l}{Full}\\
    \midrule
    \multirow{2}{*}{\rotatebox{0}{TaxiNYC}} &  WMAPE & 18.92\% & 17.44\% & N/A & \textbf{14.53\%} \\
    & RMSE & 13.13 & 11.08 & N/A & \textbf{8.71}\\
    \midrule
    \multirow{2}{*}{\rotatebox{0}{PEMS-BAY}} &  WMAPE & 9.48\% & 8.02\% & 9.10\% & \textbf{6.94\%} \\
    & RMSE & 9.90 & 8.05 &9.55  & \textbf{6.99} \\
    \bottomrule
    \end{tabular}%
  \label{ablation}%
\end{table}%

}

\subsection{Frequency analysis}\label{sec:frequency_analysis}
Compared to vanilla MLPs and low-rank models, ST-INR can capture high-frequency patterns within natural signals. In other words, the incorporation of high-frequency information benefits the learning of ST-INR. We indicate this property by examining the Fourier spectrum and the convergence behaviors during training.
The Fourier spectrum in Fig. \ref{fig:frequency_analysis} (a) shows that ST-INR can utilize high-frequency encodings to model meaningful frequency details in real data. Compared to pure low-rank matrix factorization, it preserves more energy in the back part of the spectrum.
Fig. \ref{fig:frequency_analysis} (b) compares the loss curves under different frequency settings. After gradually increasing the frequency encoding from 0 to 10, the model achieves a lower training loss, which means that it becomes easier to fit the real-world data with higher frequencies. 

\begin{figure}[!htb]
\centering
\subfigure[Single-sided frequency spectrum on Seattle data]{
\includegraphics[scale=0.45]{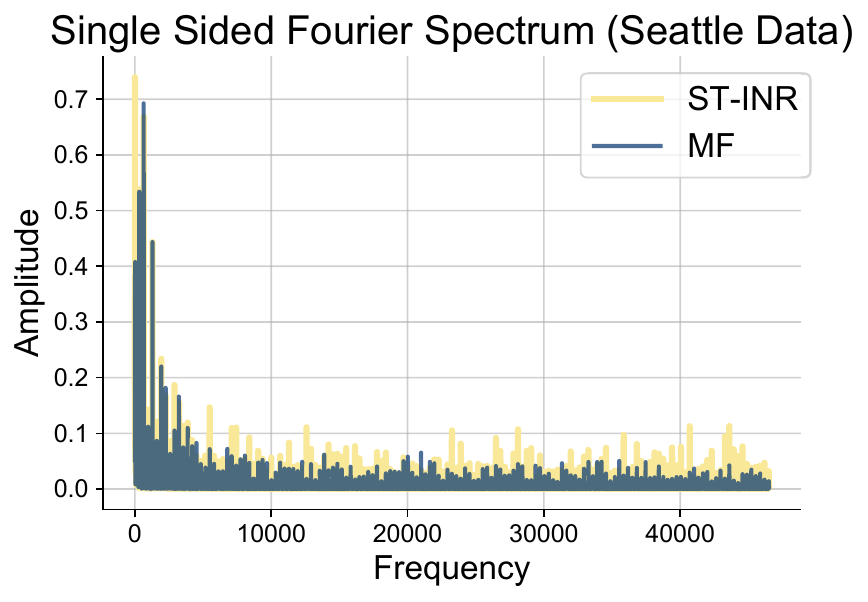}
}
\centering
\subfigure[Training loss under different frequency components (three different runs)]{
\includegraphics[scale=0.4]{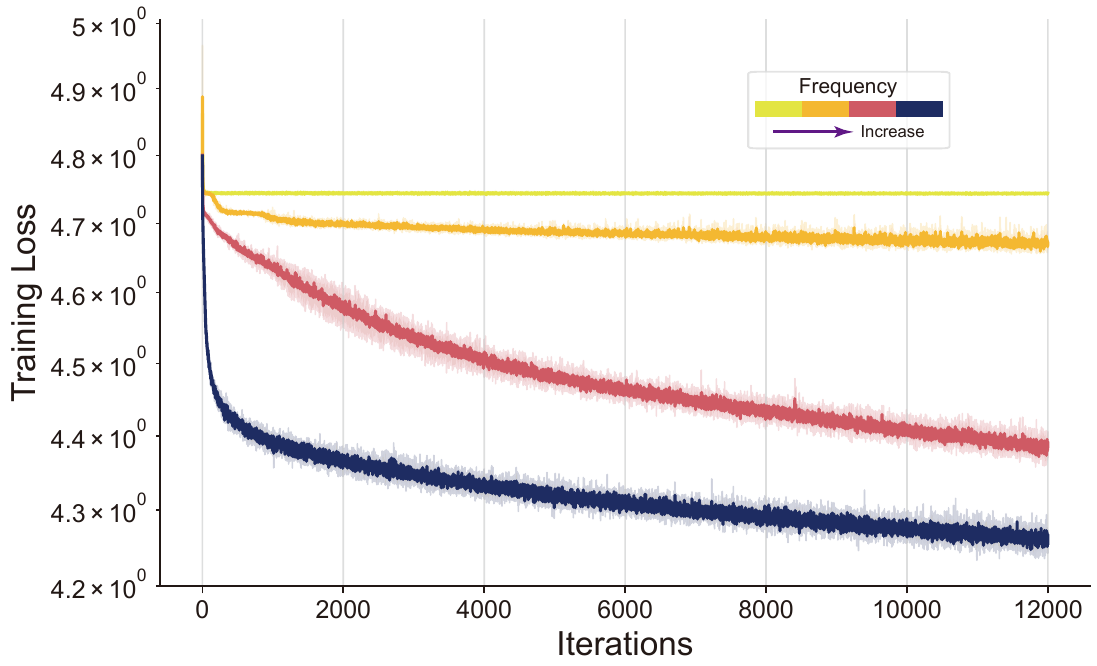}
}
\caption{Frequency analysis.}
\label{fig:frequency_analysis}
\end{figure}

\begin{figure}[!htb]
\centering
\subfigure[Fourier features with different parameters]{
\includegraphics[scale=0.3]{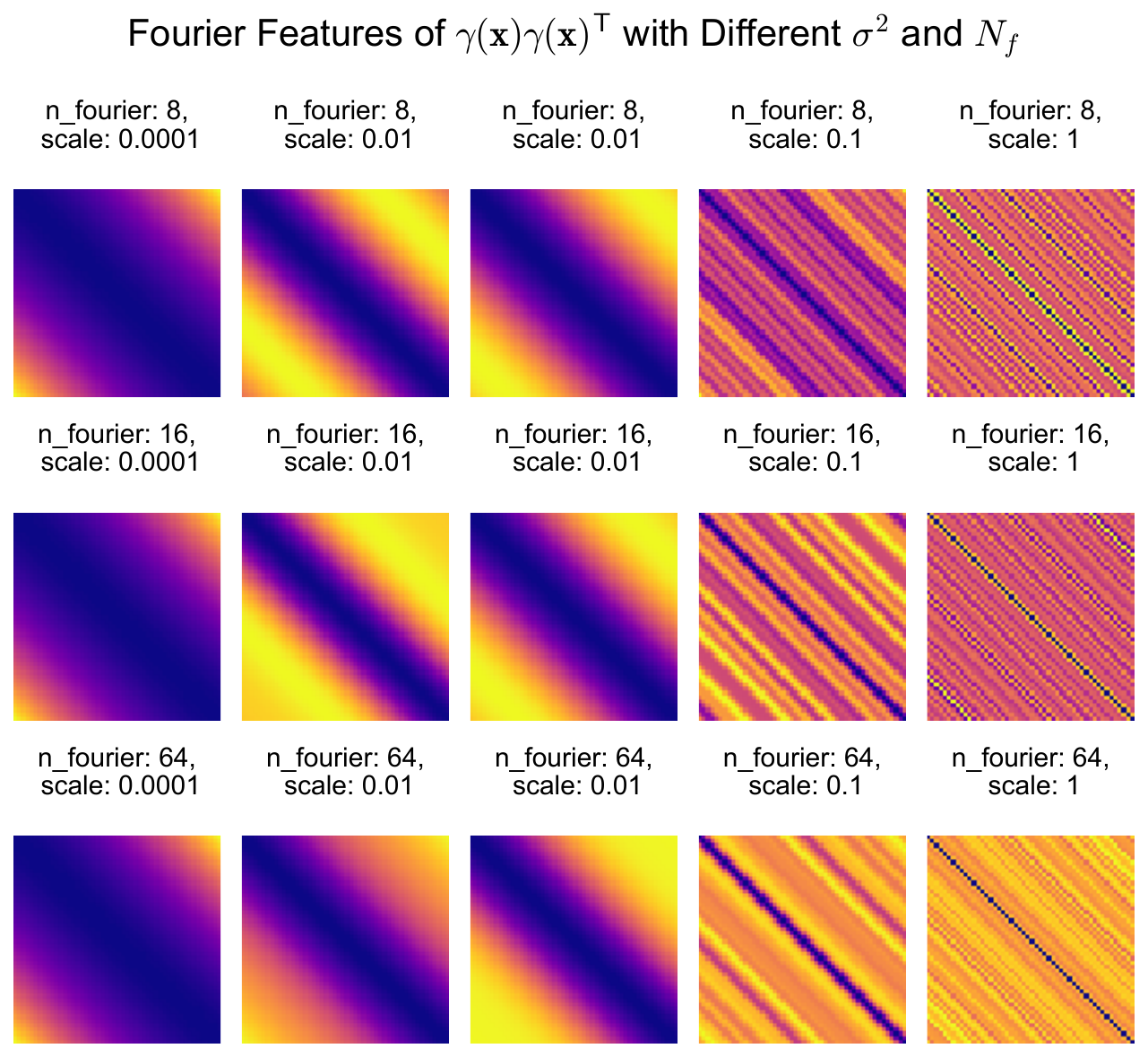}}
\centering
\subfigure[Factorized frequency with different spatial and temporal modes]{
\includegraphics[scale=0.4]{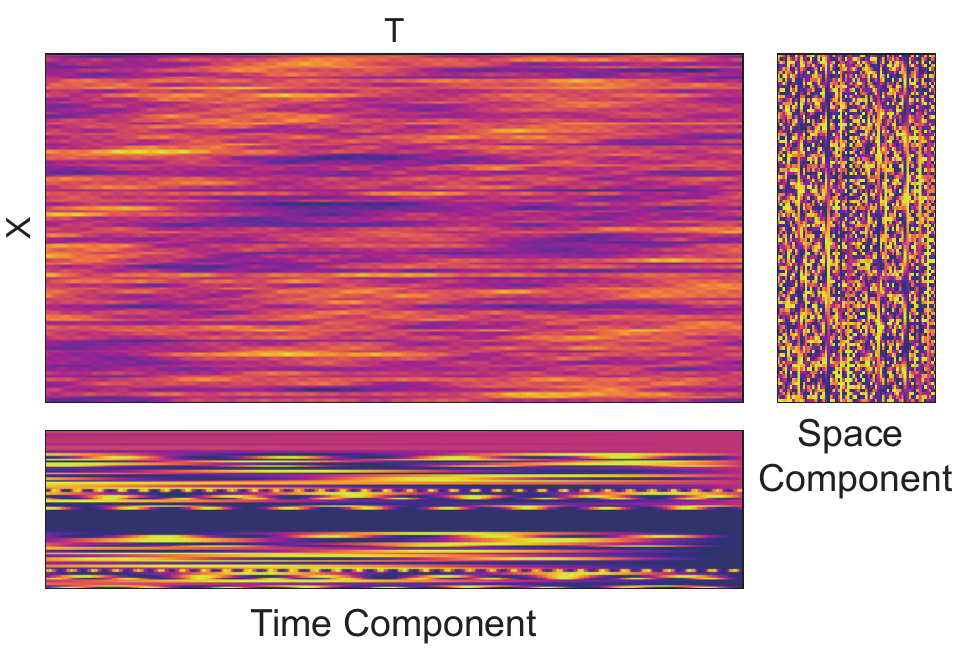}}
\caption{Visualization of random Fourier features.}
\label{fig:frequency_factor}
\end{figure}

Note that the CRF in Eq. \eqref{eq:crf} has two important parameters, i.e., the number of Fourier features $N_f$ and the dictionary of scales $\{\sigma_1^2,\dots,\sigma_{N_f}^2\}$. We study its effect in Fig. \ref{fig:frequency_factor}. As can be seen, increasing the scales can have a wider spectrum, covering more frequency patterns, such as periodicity. All the while, adding the number of CRF can interpolate the frequency distributions, sharing more details. Moreover, thanks to the factorized design, we can use separate frequency encodings with different $N_f$ and $\sigma^2$ along the time and space dimensions.

{To further justify our hypothesis and indicate the impact of high-frequency encoding, we perform ablation studies to examine the model performances after removing the high-frequency encoding, i.e., Eqs. \eqref{eq:siren} and \eqref{eq:crf}. To ensure a fair comparison, we directly set the sine activation to ReLU and replace the CRF by a linear layer with the same dimension. Results are shown in Tab. \ref{Tab: ablation_frequency}.

\begin{table}[!htb]
  \centering
  \caption{{Results (in terms of WMAPE and RMSE) of ST-INR w/ and w/o high-frequency encoding.}}
    \begin{tabular}{c|c|ccccc}
    \toprule
    \multicolumn{2}{c}{Dataset}  & \multicolumn{1}{c}{\texttt{NGSIM}}& \multicolumn{1}{c}{\texttt{TaxiBJ64}}  & \multicolumn{1}{c}{\texttt{TaxiNYC}} & \multicolumn{1}{c}{\texttt{PEMS-BAY}} & \multicolumn{1}{c}{\texttt{Seattle}} \\
    \midrule
    \multirow{2}{*}{\rotatebox{0}{w/ high-frequency enc.}} & WMAPE & \textbf{11.05\%} &\textbf{17.44\%} & \textbf{14.53\%} & \textbf{6.94\%} &\textbf{7.81\%} \\
    & RMSE   & \textbf{4.88} & \textbf{21.64} & \textbf{8.71}& \textbf{6.99} & \textbf{4.49}\\
    \midrule
    \multirow{2}{*}{w/o high-frequency enc.} & WMAPE & {32.33\%} & 75.92\% & 95.15\% & 11.20\% & 12.78\%  \\
    & RMSE   & {13.66} & 99.32& 76.16& 10.31 & 11.54 \\
    \bottomrule
    \end{tabular}%
  \label{Tab: ablation_frequency}%
\end{table}%

As can be seen, ST-INR struggles to achieve desirable performances without explicit high-frequency encoding. For dynamic mobility flow data such as grid-based flow and OD flow, the performance degradation is more severe. These results verify the significance of learning high-frequency components in STTD modeling.
}

\subsection{Model efficiency}\label{exp:efficiency}
Since our model is optimized through stochastic gradient descent, it can be efficiently implemented by a modern deep learning framework such as PyTorch. Besides, due to the spatial-temporal factorization, it features reduced computational complexity. We adopt TaxiBJ data with different data scales (from $8\times8$ to $128\times128$) to examine its efficiency in Fig. \ref{Fig:efficiency}. All measurements are performed on a single GPU device.
As indicated by our analysis in Section \ref{sec:complexity}, the full mode has cubic complexity with respect to the spatial (temporal) dimension $n$. In contrast, the factorized version approximates to scale linearly to the dimensions, which is consistent with the results in Fig. \ref{Fig:efficiency}. This scalability makes our model feasible for large-scale datasets.

\begin{figure}[!htb]
\centering
\includegraphics[scale=0.5]{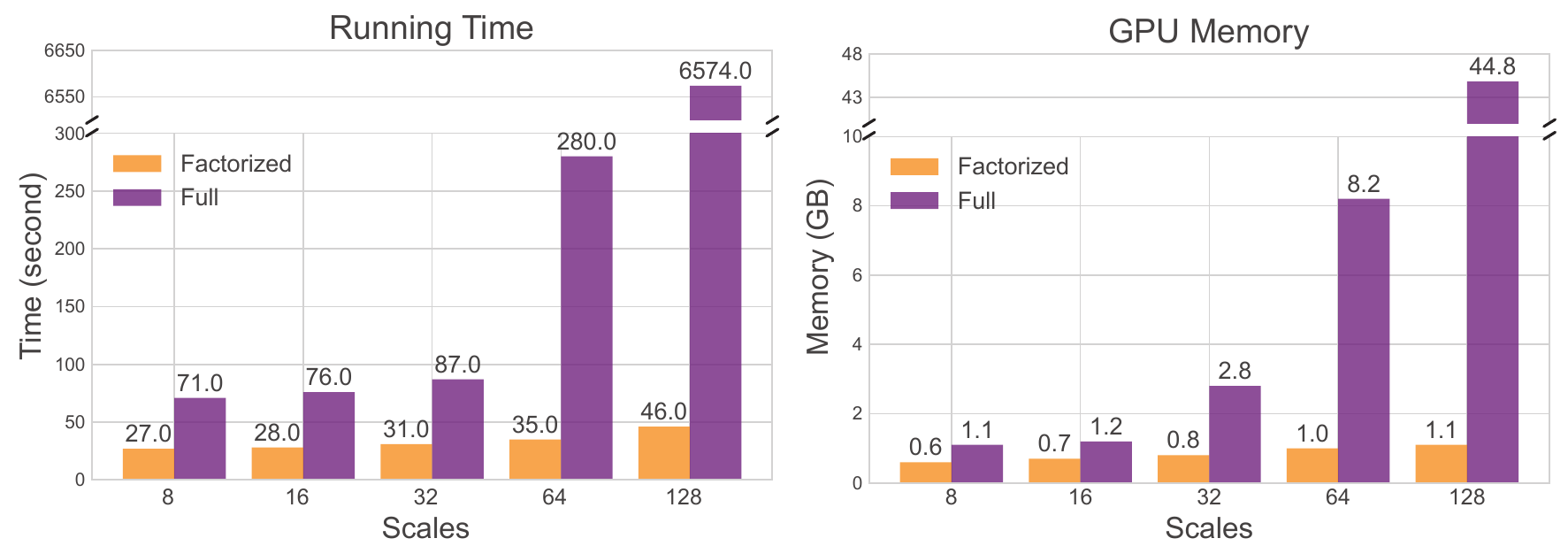}
\caption{Model efficiency on TaxiBJ data.}
\label{Fig:efficiency}
\end{figure}

\subsection{Hyperparameter analysis}\label{sec:hyperparameter}
As a deep neural network model, it has a number of parameters that need to be tuned. As suggested by \citet{luo2023low}, the weight decay parameter has an impact on overall continuity, thus affecting the smoothness regularization. Furthermore, deep neural networks are supposed to benefit from the model depth \citep{he2016deep}. We evaluate the two hyperparameters in several adopted datasets in Fig. \ref{Fig:hyperparameter}. We can see that adding more INR layers indeed benefits the model performance, but increases the risks of overfitting on small datasets. 
For data with more perceivable details such as Seattle, using a too large smoothness penalty will make the model degrade. For data with relatively simpler patterns, such as TaxiBJ, more smooth solutions are encouraged.
In practice, setting the decay parameter to 1 can yield a desirable performance on most of the data.

\begin{figure}[!htb]
\centering
\includegraphics[scale=0.4]{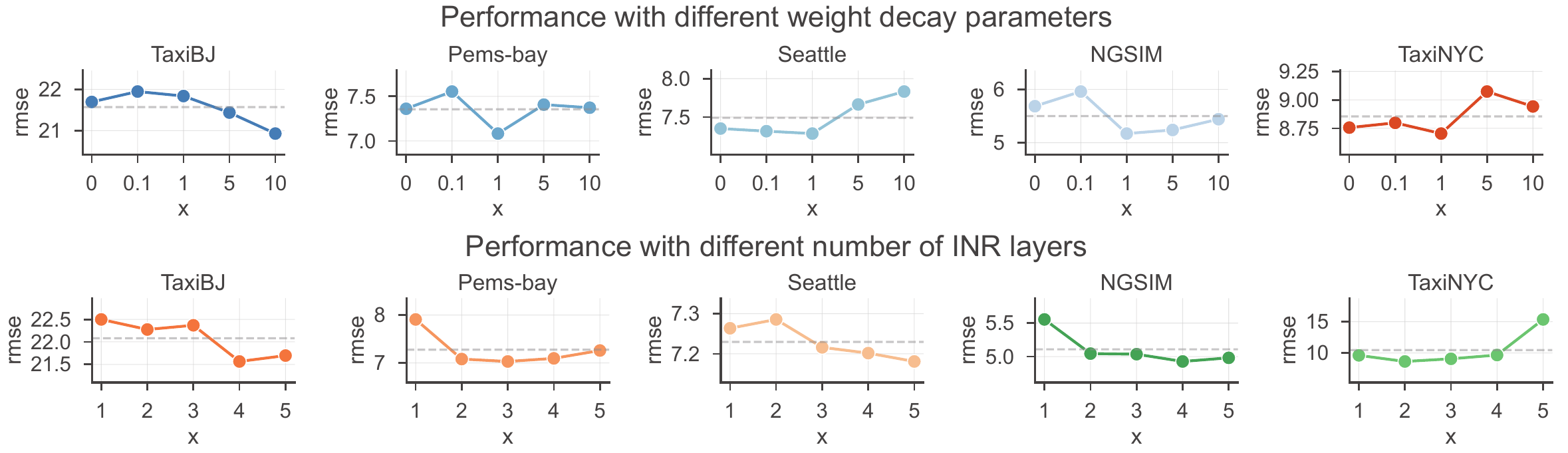}
\caption{Hyperparameter studies on different datasets.}
\label{Fig:hyperparameter}
\end{figure}

\subsection{Implicit low-rank regularization}\label{exp:low-rank}
\begin{figure}[!htb]
\centering
\subfigure[Nuclear norm under different missing rates]{
\includegraphics[scale=0.42]{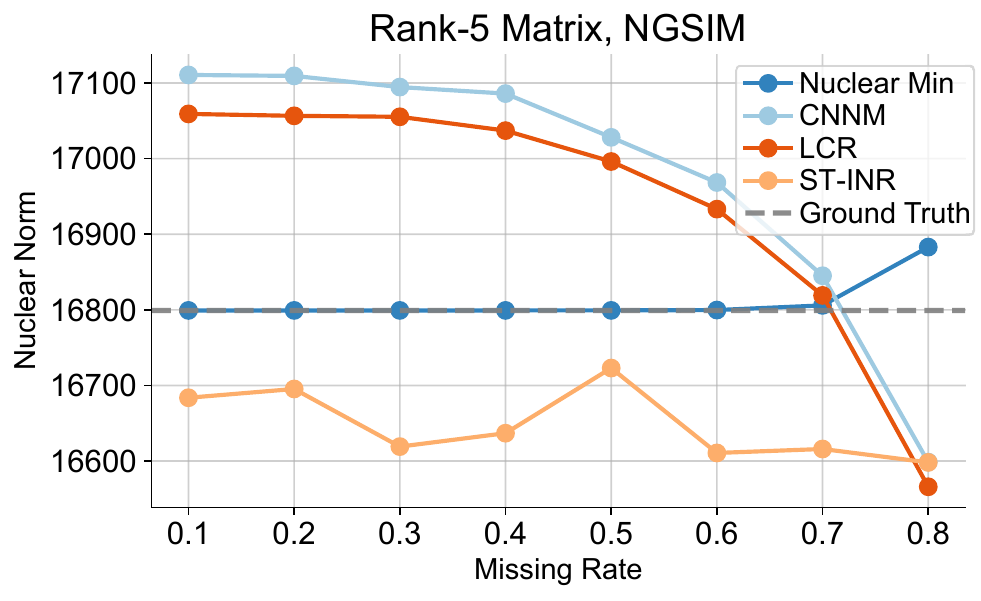}}
\centering
\subfigure[Effective rank under different missing rates]{
\includegraphics[scale=0.42]{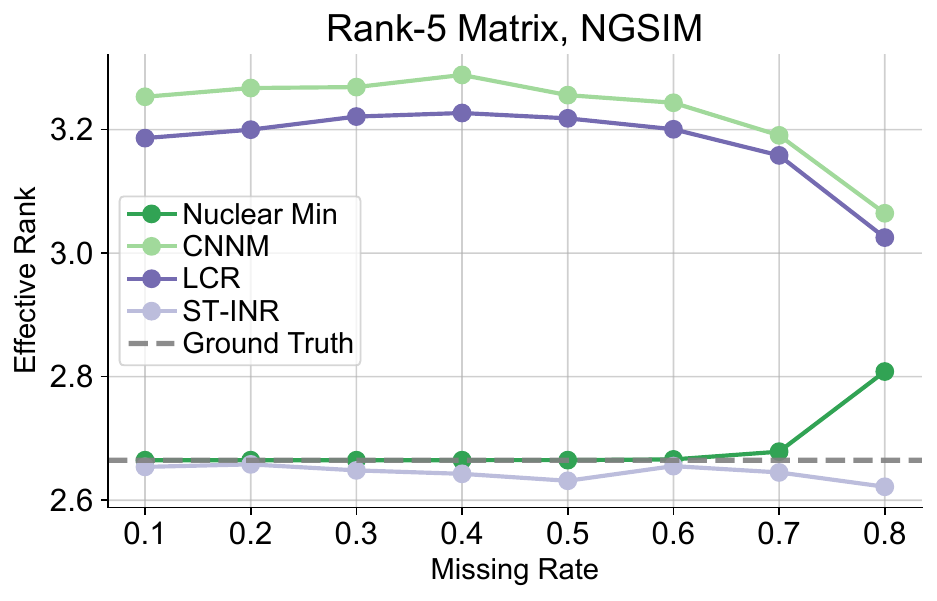}}
\centering
\subfigure[Nuclear norm with different factorized dimension]{
\includegraphics[scale=0.42]{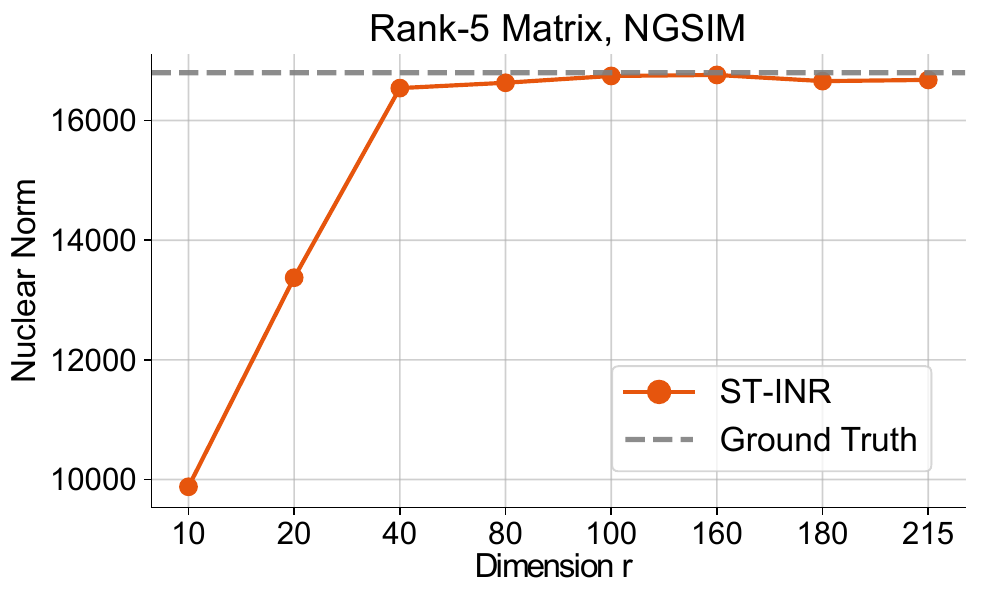}}
\centering
\subfigure[Effective rank with different factorized dimension]{
\includegraphics[scale=0.42]{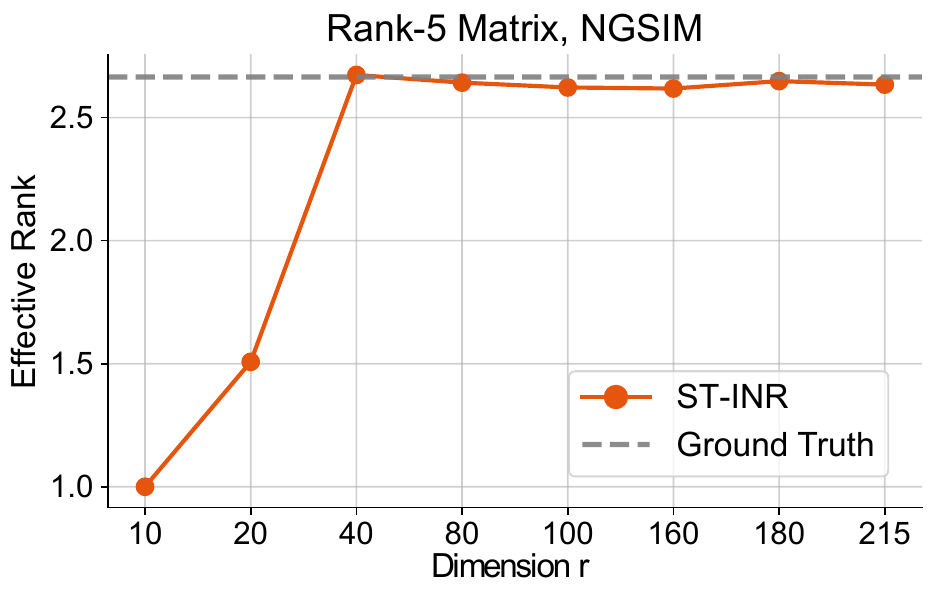}}
\caption{Implicit low-rank regularization on a rank-5 speed matrix of NGSIM data.}
\label{fig:low-rankness}
\end{figure}

Low-rankness is a strong inductive bias for STTD, especially when the observations are sparse.
The implicit low-rankness of our model is discussed in Section \ref{sec:implicit_low_rank}. We now provide some empirical evidence on this implicit regularization. To evaluate the rank (norm) minimization effect, we manually construct a rank-5 matrix by reconstructing NGSIM data using a truncated singular value decomposition, i.e., $\widetilde{\mathbf{X}}=\mathbf{U}[:,:5]\mathbf{D}[:5,:5]\mathbf{V}^{\mathsf{T}}[:5,:]$. We then examine the nuclear norm and effective rank \citep{roy2007effective} of the estimations of different methods under different missing rates.
As a reference, we add the result of a rank-constrained minimum nuclear norm as the global optima, termed ``Nuclear Min'' in Fig. \ref{fig:low-rankness} (a) and (b).

The results presented in Fig. \ref{fig:low-rankness} (a)-(b) indicate that as the missing rate increases, all methods converge towards a solution with lower rank and smaller nuclear norm. However, our model consistently achieves a lower norm and a smaller effective rank than other matrix completion methods. More importantly, although our model has much smaller nuclear norm than ground truth and global optimal values, it has close effective rank values to them. This reveals that our model does not rely on the minimization of a surrogate nuclear norm, but directly encourages a low-rank solution. This differentiates it from existing nuclear norm-based methods.

To further evaluate whether the dimension of factorization has impacts on the implicit low-rankness like the rank constraint on factor matrices of the MF model, we increase the dimension from $r=10$ to $r=\min\{N,T\}$. $r=\min\{N,T\}$ results in a full-dimensional factorization. The results in Fig. \ref{fig:low-rankness} (c)-(d) are insightful: a low rank-constrained model fails to recover the true norm and rank values, leading to a underfitting resolution; Instead, with the gradual relaxation of the rank constraints, even the full-dimensional case can produce a precise reconstruction.  
This finding is in alignment with previous work \citep{gunasekar2017implicit}.

To summarize, the above empirical results suggest that our model has an implicit regularization that admits stronger bias towards low-rankness, which shows better performance than existing nuclear norm minimization methods. 
A desirable reconstruction can be achieved by using the implicit regularization of DMF in Section \ref{sec:implicit_low_rank}, without the need to elaborate on nuclear norm-based surrogate or perform rank-constrained alternating optimization.
In addition, we can also bypass the need to tune the rank parameter $r$ by directly using a full-dimensional factorization.
These findings could inspire a new line for traffic data imputation or forecasting studies that adopt a newly designed norm or a variation of the nuclear norm optimized by ADMM or ALS.

\section{Conclusion and outlook}\label{conclusions}
In this work, we demonstrate a novel {STTD} learning method based on implicit neural representations, termed ST-INR. By parameterizing STTD as continuous deep neural networks, we train ST-INR to directly map the spatial-temporal coordinates to the traffic states. 
ST-INR explicitly encodes high-frequency structures to learn complex details of STTD while at the same time implicitly learning low-rank priors and smoothness regularization from data itself to reconstruct dominant patterns.
Due to the generality of this representation, it can be exploited to model a variety of STTD, such as vehicle trajectory, origin-destination flows, grid flows, highway sensor networks, and urban networks. 
Experimental results on various real-world benchmarks indicate that our model consistently outperforms traditional low-rank models such as tensor factorization and nuclear norm minimization. It also has the potential to generalize across different data structures and problem settings. In addition, other important properties such as the incorporation of high-frequency structures, implicit low-rankness, and inherent smoothness can function as new inductive biases for STTD modeling, providing an alternative to conventional low-rank models.

Meanwhile, based on our framework, there are still some directions that require future efforts, for example,
\begin{enumerate}
    \item Physics-informed learning of traffic dynamics: How to enable the current framework to be guided by physics? A possibly feasible routine in integrating it with physics-based machine learning methods, such as physics-informed neural networks \citep{raissi2019physics}.
    \item Forecasting irregularly sampled traffic time series: Real-world traffic data may be recorded by sensors with different sampling frequencies or by varying the number of mobile sensors. Due to the continuous nature of our model, it is possible to adopt it to forecast irregular traffic data \citep{ye2012short}. 
    \item Representation learning of vehicle paths: How to organize vehicle path sets into an acceptable input format of INRs and perform path flow estimation for urban road networks is a promising question. 
    \item {Integration with large language models: ST-INR can be adopted as a decoder for different encoder architectures. For example, it can be integrated into the downstream processing of large language models to deal with more complicated traffic problems \citep{liu2023can,qu2023envisioning}.}
\end{enumerate}

Overall, we believe that the proposed ST-INR can provide an opportunity to unify STTD learning methods and thereby facilitate the development of a large foundation model for generalized STTD analysis. 



\appendix
\section*{Appendix}
\section{More results on traffic state estimation}
{In this section, we provide more results on the highway traffic state estimation task in Fig. \ref{fig:ngsim_result_sup}. 
In comparison to the state-of-the-art tensor-based estimation model, STHTC, the proposed model is capable of reconstructing congestion waves with greater realism. Notably, the results obtained from STHTC exhibit some discontinuities and abnormal zero values.
}

\begin{figure}[!htb]
\centering
\includegraphics[width=0.99\textwidth]{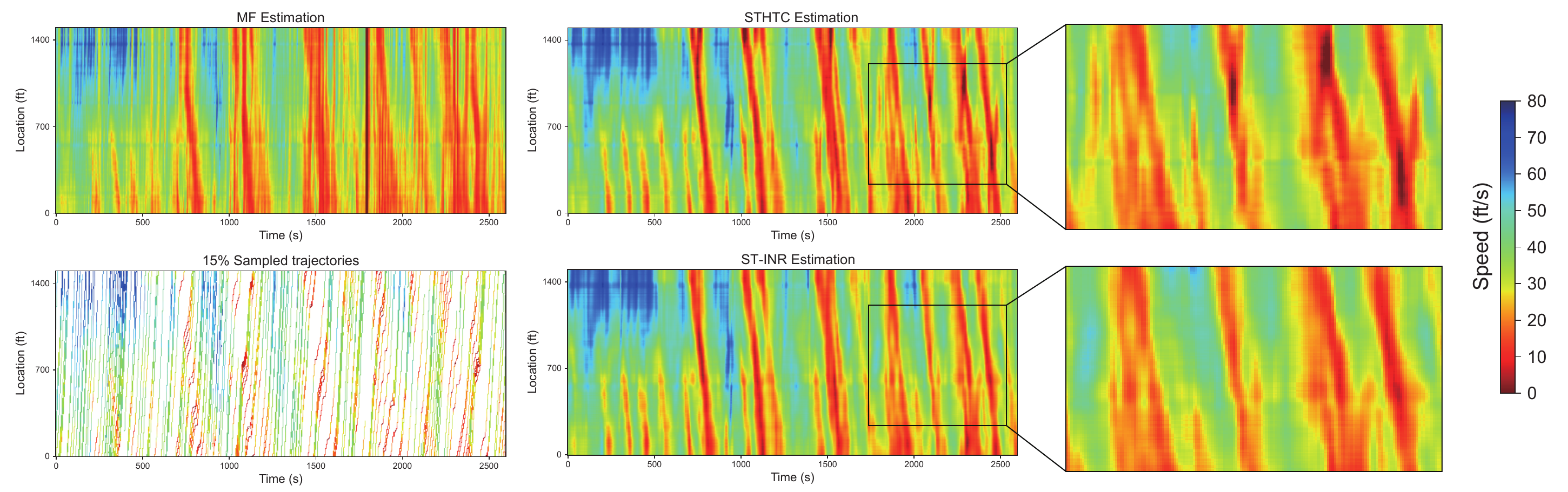}
\caption{{Supplementary results of TSE on discrete grid.}}
\label{fig:ngsim_result_sup}
\end{figure}

\nolinenumbers
\section*{Acknowledgement}
This research was sponsored by the National Natural Science Foundation of China (52125208), the National Natural Science Foundation of China's Young Scientists Fund (52302413), the Science and Technology Commission of Shanghai Municipality (No. 22dz1203200), and the Research Grants Council of the Hong Kong Special Administrative Region, China (Project No. PolyU/25209221 and PolyU/15206322).


\footnotesize
\bibliographystyle{elsarticle-harv}
\bibliography{reference}



\end{document}